\declaretheorem[name=Guiding example,numbered=no]{example}
\DeclareMathOperator*{\argmax}{arg\,max}
\newcommand{\indep}{\perp \!\!\! \perp}
\tikzset{
node/.style={draw, circle, thick, minimum size=25pt, align=center, inner sep=1pt},
edge.u/.style={-, >=latex, thick, shorten >=1pt, shorten <=1pt},
edge.d/.style={->, >=latex, thick, shorten >=1pt, shorten <=1pt},
edge.b/.style={<->, >=latex, thick, shorten >=1pt, shorten <=1pt}
}
\title{Causal Reinforcement Learning\\
using Observational and Interventional Data}
\author{Maxime Gasse\\
Polytechnique Montréal\\
Montréal QC, Canada\\
\texttt{maxime.gasse@polymtl.ca} \\
\And
Damien Grasset \\
IRT Saint Exupéry Canada \\
Montréal QC, Canada\\
\texttt{damien.grasset@irt-saintexupery.com} \\
\And
Guillaume Gaudron \\
Ubisoft La Forge \\
Bordeaux, France \\
\texttt{guillaume.gaudron@ubisoft.com} \\
\And
Pierre-Yves Oudeyer\\
Inria Bordeaux Sud-Ouest \\
Bordeaux, France \\
\texttt{pierre-yves.oudeyer@inria.fr}
}
\begin{document}

\maketitle

\begin{abstract}
Learning efficiently a causal model of the environment is a key challenge of model-based RL agents operating in POMDPs. We consider here a scenario where the learning agent has the ability to collect online experiences through direct interactions with the environment (interventional data), but has also access to a large collection of offline experiences, obtained by observing another agent interacting with the environment (observational data). A key ingredient, that makes this situation non-trivial, is that we allow the observed agent to interact with the environment based on hidden information, which is not observed by the learning agent. We then ask the following questions: can the online and offline experiences be safely combined for learning a causal model ? And can we expect the offline experiences to improve the agent's performances ? To answer these questions, we import ideas from the well-established causal framework of do-calculus, and we express model-based reinforcement learning as a causal inference problem. Then, we propose a general yet simple methodology for leveraging offline data during learning. In a nutshell, the method relies on learning a latent-based causal transition model that explains both the interventional and observational regimes, and then using the recovered latent variable to infer the standard POMDP transition model via deconfounding. We prove our method is correct and efficient in the sense that it attains better generalization guarantees due to the offline data (in the asymptotic case), and we illustrate its effectiveness empirically on synthetic toy problems. Our contribution aims at bridging the gap between the fields of reinforcement learning and causality.
\end{abstract}

\section{Introduction}
\label{sec:intro}

As human beings, a key ingredient in our learning process is experimentation: we perform actions in our environment and we measure their outcomes. Another ingredient, maybe less understood, is observation: we observe the behaviour of other people, animals, or even plants interacting and evolving in our environment. A whole field of science, astronomy, relies on the observation of celestial bodies in the sky, on which experimentation is virtually impossible. And yet it is well-known that observation alone is not sufficient to infer how our environment works, or more precisely to predict the outcome of our own actions\footnote{Simply put, correlation does not imply causation. Or, citing \citet{journal/statsur/Pearl09}, ``behind every causal conclusion there must lie some causal assumption that is not testable in observational studies''.}, especially when the behaviours we observe depend on hidden information. So which role exactly does observation play during learning ? In particular, how do we combine observation and experimentation ?

In the context of reinforcement learning (RL), a related question is the following: can offline data, resulting from observations, be combined with online data resulting from experimentation, in order to improve the performance of a learning agent ? In the Markov Decision Process (MDP) setting, where the agent observes the entire state of the environment, the answer is straightforward and practical solutions exist, leading to the fastly growing field of offline reinforcement learning \citep{journal/alo/LangeGR12,levine2020offline} where large databases of demonstrations can be efficiently leveraged.
In the more general Partially-Observable MDP (POMDP) setting however, the question turns out to be much more challenging. A typical example is in the context of medicine, where offline data is collected from physicians which may rely on information absent from their patient's medical records, such as their wealthiness or their lifestyle. Suppose that wealthy patients in general get prescribed specific treatments by their physicians, because they can afford it, while being less at risk to develop severe conditions regardless of their treatment, because they can also afford a healthier lifestyle. This creates a spurious correlation called confounding, and will cause a naive recommender system to wrongly infer that a treatment has positive health effects. A second example is in the context of autonomous driving, where offline data is collected from human drivers who have a wider field of vision than the camera on which the robot driver relies. Suppose human drivers push the brakes when they see a person waiting to cross the street, and only when the person walks in front of the car it enters the camera's field of vision. Then, again, a naive robot might wrongly infer from its observations that whenever brakes are pushed, a person appears in front of the car. Suppose now that the robot's objective is to never collide with someone, it might deduce that never pulling the brakes is a good strategy. Of course, in both those situations, the learning agent will eventually infer the right causal effects of its actions if it collects enough online data from its own interactions. However, in both those situations also, performing many interventions for the sole purpose of seeing what happens is not really affordable, while collecting offline data by observing the behaviour of human agents is much more realistic.

In this paper we study the question of combining offline and online data under the Partially-Observable Markov Decision Process (POMDP) setting, by importing tools and ideas from the well-established field of causality \citep{book/Pearl09} into the model-based RL framework. Our contribution is three-fold:
\begin{enumerate}
    \item We formalise model-based RL as a causal inference problem using the framework of $do$-calculus
    \citep{conf/uai/Pearl12}, which allows us to reason formally about online and offline scenarios in a natural manner (Section~\ref{sec:causal-rl}).
    \item We present a generic method for combining offline and online data in model-based RL (Section~\ref{sec:contribution}), with a formal proof of correctness even when the offline policy relies on privileged hidden information (confounding variable), and a proof of efficiency in the asymptotic case (with respect to using online data only).
    \item We propose a practical implementation of our method, and illustrate its effectiveness in two experiments with synthetic toy problems (Section~\ref{sec:experiments}).
\end{enumerate}

While our proposed method can be formulated outside of the $do$-calculus framework, in this paper we hope to demonstrate that $do$-calculus offers a principled and intuitive tool to reason about model-based RL. By relating common concepts from RL and causality, we wish that our contribution will ultimately help to bridge the gap between the two communities.

\section{Background}
\label{sec:background}

\subsection{Notation}

In this paper, upper-case letters in italics denote random variables (e.g. $X,Y$), while their lower-case counterpart denote their value (e.g. $x,y$) and their calligraphic counterpart their domain (e.g., $x \in \mathcal{X}$). We consider only discrete random variables. To keep our notation uncluttered, with a slight abuse of notations and use $p(x)$ to denote sometimes the event probability $p(X=x)$, and sometimes the whole probability distribution of $X$, which should be clear from the context. In the context of sequential models we also distinguish random variables with a temporal index $t$, which might be fixed (e.g., $o_0,o_1$ ), or undefined (e.g., $p(s_{t+1}|s_t,a_t)$ denotes at the same time the distributions $p(s_1|s_0,a_0)$ and $p(s_2|s_1,a_1)$). We also adopt a compact notation for sequences of contiguous variables (e.g., $s_{0 \to T} = (s_0,\dots,s_T) \in \mathcal{S}^{T+1}$ ), and for summation over sets ($\sum_{x \in \mathcal{X}} \iff \sum_x^\mathcal{X}$).
We assume the reader is familiar with the concepts of conditional independence ($X \indep Y \mid Z$) and probabilistic graphical models based on directed acyclic graphs (DAGs), which can be found in most introductory textbooks, e.g. \citet{books/Pearl89,book/studeny05,books/KollerF09}.

\subsection{Do-calculus}

Several frameworks exist in the literature for reasoning about causality \citep{book/Pearl09, book/ImbensRubin15}.
Here we follow the framework of Judea Pearl, whose concept of \emph{ladder of causation} is particularly relevant to answer RL questions. The first level of the ladder, \emph{association}, relates to the observation of an external agent acting in the environment, while the second level, \emph{intervention}, relates the question of what will happen to the environment as a result of one's own actions. The tool of do-calculus \citep{conf/uai/Pearl12} acts as a bridge between these two levels, and relates interventional distributions, such as $p(y|do(x))$, to observational distributions, such as $p(y|x)$, in causal systems that can be expressed as DAGs.
In a nutshell, do-calculus allows for measuring changes in the distribution of random variables $\{X,Y,Z,\dots\}$, when one performs an arbitrary intervention $do(x)$ which forces some variables to take values $X=x$ regardless of their causal ancestors. It relies on a complete set of rules \citep{conf/uai/HuangV06,conf/aaai/ShpitserP06}, which allow for the following equivalences when specific structural conditions are met in the causal DAG:
\begin{itemize}
    \item R1: insertion/deletion of observations $p(y|do(x),z,w) = p(y|do(x),w)$,
    \item R2: action/observation exchange $p(y|do(x),do(z),w) = p(y|do(x),z,w)$,
    \item R3: insertion/deletion of actions $p(y|do(x),do(z),w) = p(y|do(x),w)$.
\end{itemize}
We refer the reader to \citet{conf/uai/Pearl12} for a thorough introduction to $do$-calculus. In this paper, we will use these rules to derive formal solutions to model-based RL in various POMDP settings.

\subsection{Partially-Observable Markov Decision Process}

We consider Partially-Observable Markov Decision Processes (POMDPs) of the form $M = (\mathcal{S}, \mathcal{O}, \mathcal{A}, p_\textit{init}, p_\textit{obs}, p_\textit{trans}, r)$, with hidden states $s \in \mathcal{S}$, observations $o \in \mathcal{O}$, actions $a \in \mathcal{A}$, initial state distribution
$p_\textit{init}(s_0)$, state transition distribution
$p_\textit{trans}(s_{t+1}|s_t,a_t)$, observation distribution 
$p_\textit{obs}(o_t | s_t)$, and reward\footnote{Without loss of generality we consider the reward to be part of the observation $o_t$ to simplify our notation.} function $r: \mathcal{O} \to \mathbb{R}$. For simplicity we assume episodic tasks with finite horizon $H$. We further denote a complete trajectory $\tau=(o_0,a_0,\dots,o_H)$, and for convenience we introduce the concept of a history at time $t$, $h_t=(o_0,a_0,\dots,o_t)$.

A common control scenario for POMDPs is when actions are decided based on all the available information from the past. We call this the \emph{standard POMDP setting}. The control mechanism can be represented as a stochastic policy $\pi(a_t|h_t)$, which together with the POMDP dynamics $p_\textit{init}$, $p_\textit{obs}$ and $p_\textit{trans}$ defines a probability distribution over trajectories $\tau$,
\begin{equation*}
    p_\textit{std}(\tau) = \sum_{s_{0 \to |\tau|}}^{\mathcal{S}^{|\tau|+1}} p_\textit{init}(s_0) p_\textit{obs}(o_0|s_0) \prod_{t=0}^{|\tau|-1} \pi(a_t|h_t) p_\textit{trans}(s_{t+1}|s_t,a_t)p_\textit{obs}(o_{t+1}|s_{t+1})
    \text{.}
\end{equation*}
This whole data-generation mechanism can be represented visually as a DAG, represented in Figure~\ref{fig:pomdp-unc}. A key characteristic in this setting is that $A_t \indep S_t \mid H_t$ is always true, that is, every action is independent of the current state given the history.

\subsection{Model-based RL}

Assuming the objective is the long-term reward, the POMDP control problem formulates as:
\begin{equation}
    \label{eq:pomdp-control}
    \pi^\star = \argmax_\pi \mathop{\mathbb{E}}_{\tau \sim p_\textit{std}}\left[\sum_{t=0}^{|\tau|}r(o_t)\right]
    \text{.}
\end{equation}%
Model-based RL relies on the estimation of the POMDP transition model $p_\textit{std}(o_{t+1}|h_t,a_t)$ to solve (\ref{eq:pomdp-control}), which decomposes into two sub-problems:
\begin{enumerate}
    \item learning: given a dataset $\mathcal{D}$, estimate a transition model $\hat{q}(o_{t+1}|h_t,a_t) \approx p_\textit{std}(o_{t+1}|h_t,a_t)$;
    \item planning: given a history $h_t$ and a transition model $\hat{q}$, decide on an optimal action $a_t$.
\end{enumerate}%
As we will see shortly, the transition model $\hat{q}$ seeked by model-based RL is inherently causal \citep{chapter/Gershman17}. In this work we consider only the first problem above, that is, learning the (causal) POMDP transition model from data.

\section{Model-based RL as a causal inference}
\label{sec:causal-rl}

Decision problems, such as those arising in POMDPs, can naturally be formulated in terms of causal queries where actions directly translate into $do$ statements. For example, given past information about the POMDP process, what will be the causal effect of an action (intervention) on future rewards ?

\begin{example}
    Consider a door, a light, and two buttons A and B. The light is red 60\% of the time, and green the rest of the time. When the light is red, button A opens the door, while when the light is green, then button B opens the door. I am told that the mechanism responsible for opening the door depends on both the light color and the button pressed ($\textit{light} \rightarrow \textit{door} \leftarrow \textit{button}$), but I am not given the mechanism itself. Suppose now that I am colorblind, and I want to open the door. Which button should I press ? In the do-calculus framework, the question I am asking is
    \begin{equation*}
        \argmax_{\textit{button}\in \{A,B\}} p(\textit{door=open}|do(\textit{button}))
        \text{.}
    \end{equation*}
\end{example}

\begin{figure}[tbp]
\centering
\hfill
\begin{minipage}{0.45\textwidth}
    \centering
    \begin{tikzpicture}[scale=1.35, every node/.style={scale=0.8}]
        \node[node,label={$p_\textit{init}$}] (S0) at (-2, 0) {$S_0$};
        \node[node,label={$p_\textit{trans}$}] (St) at (0, 0) {$S_t$};
        \node[node,label={$p_\textit{trans}$}] (Stp1) at (2, 0) {$S_{t+1}$};
        \node[node,label={80:$p_\textit{obs}$},fill=gray!25] (O0) at (-2, -1) {$O_0$};
        \node[node,label={80:$p_\textit{obs}$},fill=gray!25] (Ot) at (0, -1) {$O_t$};
        \node[node,label={80:$p_\textit{obs}$}] (Otp1) at (2, -1) {$O_{t+1}$};
        \node[node,label={270:$\pi$}] (At) at (1, -1) {$A_t$};
        \node[node,label={270:$\pi$},fill=gray!25] (Atm1) at (-1, -1) {$A_{t-1}$};
        \draw[edge.d] (St) to (Stp1);
        \draw[edge.d] (St) to (Ot);
        \draw[edge.d] (Stp1) to (Otp1);
        \draw[edge.d] (S0) to (O0);
        \draw[edge.d, dashed] (O0) to (Atm1);
        \draw[edge.d, dashed] (S0) to (St);
        \draw[edge.d] (Atm1) to (St);
        \draw[edge.d] (Ot) to (At);
        \draw[edge.d] (At) to (Stp1);
        \draw[edge.d] (Atm1) to[out=-35, in=-155] (At);
        \draw[edge.d] (O0) to[out=-35, in=-130] (At);
    \end{tikzpicture}
    \vspace{-0.6cm}
    \caption{Standard POMDP setting.}
    \label{fig:pomdp-unc}
\end{minipage}
\hfill
\begin{minipage}{0.45\textwidth}
    \centering
    \begin{tikzpicture}[scale=1.35, every node/.style={scale=0.8}]
        \node[node,label={$p_\textit{init}$}] (S0) at (-2, 0) {$S_0$};
        \node[node,label={$p_\textit{trans}$}] (St) at (0, 0) {$S_t$};
        \node[node,label={$p_\textit{trans}$}] (Stp1) at (2, 0) {$S_{t+1}$};
        \node[node,label={80:$p_\textit{obs}$},fill=gray!25] (O0) at (-2, -1) {$O_0$};
        \node[node,label={80:$p_\textit{obs}$},fill=gray!25] (Ot) at (0, -1) {$O_t$};
        \node[node,label={80:$p_\textit{obs}$}] (Otp1) at (2, -1) {$O_{t+1}$};
        \node[node,label={270:$\pi_{prv}$}] (At) at (1, -1) {$A_t$};
        \node[node,label={270:$\pi_{prv}$},fill=gray!25] (Atm1) at (-1, -1) {$A_{t-1}$};
        \draw[edge.d] (St) to (Stp1);
        \draw[edge.d] (St) to (Ot);
        \draw[edge.d, red] (St) to (At);
        \draw[edge.d] (Stp1) to (Otp1);
        \draw[edge.d] (S0) to (O0);
        \draw[edge.d, dashed] (S0) to (St);
        \draw[edge.d, dashed, red] (S0) to (Atm1);
        \draw[edge.d, dashed] (O0) to (Atm1);
        \draw[edge.d] (Atm1) to (St);
        \draw[edge.d] (Ot) to (At);
        \draw[edge.d] (At) to (Stp1);
        \draw[edge.d] (Atm1) to[out=-35, in=-155] (At);
        \draw[edge.d] (O0) to[out=-35, in=-130] (At);
    \end{tikzpicture}
    \vspace{-0.6cm}
    \caption{Privileged POMDP setting.}
    \label{fig:pomdp-con}
\end{minipage}
\hfill
\end{figure}

\subsection{The interventional regime}

In the interventional regime, we assume a dataset $\mathcal{D}_\textit{int}$ of episodes $\tau$ collected in the standard POMDP setting from an arbitrary decision policy $\pi(a_t|h_t)$,
$$
    \mathcal{D}_\textit{int} \sim p_\textit{init}, p_\textit{trans}, p_\textit{obs}, \pi
    \text{.}
$$

Let us now adopt a causal perspective and reason in terms of interventions in the causal system, depicted in Figure~\ref{fig:pomdp-unc}. Consider that we want to control the system, that is, replace $\pi$ with $\pi^\star$, in order to maximize a long-term outcome. Then, evaluating the effect of each action on the system is a causal inference problem. In order to decide on the best first action $a_0$ given $h_0=(o_0)$, one must evaluate a series of causal queries in the form $p_\textit{std}(o_{1}|o_0,do(a_0))$, then $p_\textit{std}(o_{2}|o_0,do(a_0),o_1,do(a_1))$, and so on, and finally using those causal distributions for planning, by solving a Bellman equation. Conveniently, in the interventional regime, applying rule R2 of do-calculus on the causal DAG results in those queries being trivially identifiable from $p_\textit{std}(\tau)$. In fact, those queries exactly boil down to the standard POMDP transition model that model-based RL seeks to estimate,
\begin{equation}
    \label{eq:pomdp-standard-transition}
    p_\textit{std}(o_{t+1}|o_{0 \to t},do(a_{0 \to t})) = p_\textit{std}(o_{t+1}|h_{t},a_{t})
    \text{.}
\end{equation}
As such, model-based RL can be naturally reinterpreted in terms of causal inference. Also, a convenient property in this regime is that $p_\textit{std}(o_{t+1}|h_t,a_t)$ does not depend on the control policy $\pi$ that was used to build the dataset $\mathcal{D}_\textit{int}$. The only requirement, in order to estimate transition probabilities for every $h_t,a_t$ combination, is that $\pi$ has a non-zero chance to explore every action, that is, $\pi(a_t|h_t)>0$, $\forall a_t,h_t$. Then, an unbiased estimate of the standard POMDP transition model can be obtained simply via log-likelihood maximization:
\begin{equation}
    \label{eq:direct-learning}
    \hat{q}= \argmax_{q \in \mathcal{Q}} \sum_{\tau}^{\mathcal{D}_\textit{int}}\sum_{t=0}^{|\tau|-1} \log q(o_{t+1}|h_t,a_t)
\text{.}
\end{equation}

In some situations it is very reasonable to assume an interventional regime, for example when it is known to hold by construction. This is the case with online RL data, as the learning agent itself explicitly controls the data-collection policy $\pi(a_t|h_t)$. But it can also be the case with offline RL data, if one knows that the data-collection policy did not use any additional information besides the information available to the learning agent, $h_t$. In Atari video games for example, it is hard to imagine a human player using any kind of privileged information related to the machine's internal state $s_t$ other than the video and audio outputs from the game.

\begin{example}
     Consider again our door example. If I am able to observe myself or another colorblind person interacting with the door, then I know that which button is pressed is unrelated to which color the light is ($\textit{light} \not\rightarrow \textit{button}$). Then I can directly estimate the causal effect of the button on the door,
     \begin{equation*}
         p(\textit{door=open}|do(\textit{button})) = p(\textit{door=open}|\textit{button})
         \text{.}
     \end{equation*}
     Whichever policy is used to collect $(\textit{button}, \textit{door})$ samples\footnote{One assumption though is strict positivity, $\pi(\textit{button})>0$ $\forall \textit{button}$, so that both buttons are pressed.}, eventually I realise that button A has more chances of opening the door (60\%) than button B (40\%), and thus is the optimal choice.
\end{example}

\subsection{The observational regime}

In the observational regime, we assume a dataset $\mathcal{D}_\textit{obs}$ of episodes $\tau$ collected in the \emph{privileged POMDP setting}, depicted in Figure~\ref{fig:pomdp-con}. In this setting episodes are collected from an external agent who has access to privileged information, in the extreme case the whole POMDP state $s_t$, which the learning agent can not observe\footnote{Note that our only assumption is that this external agent has access to privileged information. We do not assume it acts optimally with respect to the learning agent's reward, or any other reward.}. In this setting we denote the data-generating control policy $\pi_{prv}(a_t|h_t,s_t)$, such that
$$
    \mathcal{D}_\textit{obs} \sim p_\textit{init}, p_\textit{trans}, p_\textit{obs}, \pi_{prv}
    \text{.}
$$%
We denote the whole episode distribution resulting from $p_\textit{init}, p_\textit{trans}, p_\textit{obs}$ and $\pi_{prv}$ as $p_\textit{prv}(\tau)$. A key characteristic in this setting is that now $A_t \indep S_t \mid H_t$ can not be assumed to hold any more.

Let us reason here again in terms of causal inference from the causal system depicted in Figure~\ref{fig:pomdp-con}. For the purpose of controlling the POMDP in the standard setting, in the light of past information $h_t$, we want to evaluate the same series of causal queries as before, in the form $p_\textit{prv}(o_{t+1}|o_{0 \to t},do(a_{0 \to t}))$. This time however, those causal queries are not identifiable from $p_\textit{prv}(\tau)$. Evaluating them would require knowledge of the POMDP hidden states $s_t$, which act as confounding variables. For example, identifying the first query at $t=0$ requires at least the observation of $s_0$,
\begin{align*}
    p_\textit{prv}(o_1|o_0,do(a_0)) &= \sum_{s_0 \in \mathcal{S}} p_\textit{prv}(s_0|o_0,do(a_0)) p_\textit{prv}(o_1|s_0,o_0,do(a_0)) \\
    &= \sum_{s_0 \in \mathcal{S}} p_\textit{prv}(s_0|o_0) p_\textit{prv}(o_1|s_0,a_0)
\end{align*}
(R3 and R2 of do-calculus, then $O_{t+1} \indep H_t \mid S_t,A_t$).

In many offline RL situations, we believe that it is common to have access to POMDP trajectories for which $A_t \indep S_t \mid H_t$ can not be assumed, for example when demonstrations are collected from a human agent acting in the world (see~Section\ref{sec:intro} for examples). In such a situation, the observed trajectories may be confounded, and naively learning a causal transition model by applying (\ref{eq:direct-learning}) might result in a non-causal model, and in non-optimal planning. A natural question is then: what should be done in such a situation ? Are confounded trajectories useless ? Can we still use this data somehow for recovering a better, unbiased causal transition model ?

\begin{example}
Take again our door example, and assume I observe another person interacting with the door. I do not know whether that person is colorblind or not ($\textit{light} \rightarrow \textit{button}$ is possible). Then, without further knowledge, I cannot recover the causal queries $p(\textit{door=open}|do(\textit{button}))$ from the observed distribution $p(\textit{door},\textit{button})$. In the $do$-calculus framework, the queries are said \emph{non identifiable}. However, if that person was to tell me the light color they see before they press A or B, then I could recover those queries as follows,
\begin{equation*}
    p(\textit{door=open}|do(\textit{button})) = \sum_{\textit{light} \in \{red, green\}} p(\textit{light}) p(\textit{door=open}|\textit{light},\textit{button})
    \text{.}
\end{equation*}
This formula, called \emph{deconfounding}, eventually yields the correct causal transition probabilities regardless of the observed policy\footnote{The strict positivity condition here is $\pi(\textit{button}|\textit{light})>0$ $\forall \textit{button},\textit{light}$.}, given that enough $(\textit{light}, \textit{button}, \textit{door})$ samples are observed.
\end{example}

\section{Combining observational and interventional data}
\label{sec:contribution}

\subsection{Problem statement}

We consider a generic situation where two datasets of POMDP trajectories $\mathcal{D}_{\textit{int}}$ and $\mathcal{D}_{\textit{obs}}$ are available, sampled respectively in the interventional regime with policy $\pi_\textit{std}(a_t|h_t)$, and in the observational (potentially confounded) regime with policy $\pi_\textit{prv}(a_t|h_t,s_t)$. We then ask the following question: is there a sound way to use the observational data for improving the estimator of the standard POMDP transition model that would be recovered from the interventional data only ?

\subsection{The augmented POMDP}

We formulate the problem of learning the standard POMDP transition model from 
$\mathcal{D}_{\textit{int}}$ and $\mathcal{D}_{\textit{obs}}$ as that of inferring a structured latent-variable model. Since both datasets are sampled from the same POMDP ($p_{\textit{init}}$, $p_{\textit{trans}}$ and $p_{\textit{obs}}$) controlled in different ways (either $\pi_\textit{prv}$ or $\pi_\textit{std}$), the overall data generating process can be represented in the form of an augmented DAG, depicted in Figure~\ref{fig:pomdp-hybrid}. We simply introduce an auxiliary variable $I \in \{0,1\}$, which acts as a regime indicator \citep{arxiv/Dawid20} for differentiating between observational and interventional data. The augmented POMDP policy then becomes $\pi(a_t|h_t,s_t,i)$, such that
\begin{align*}
    \pi(a_t|h_t,s_t,i=0) &= \pi_\textit{prv}(a_t|h_t,s_t), \;\text{and} \\
    \pi(a_t|h_t,s_t,i=1) &= \pi_\textit{std}(a_t|h_t)
    \text{.}
\end{align*}

For simplicity, in the following we will refer to the joint distribution of this augmented POMDP as the true distribution $p$, and with a slight abuse of notation we will consider $\mathcal{D}_\textit{obs}$ and $\mathcal{D}_\textit{int}$ two datasets of augmented POMDP trajectories, sampled respectively under the observational regime $(\tau,i) \sim p(\tau,i|i=0)$, and the interventional regime $(\tau,i) \sim p(\tau,i|i=1)$.
The causal queries required to control the augmented POMDP can then be identified as
\begin{align*}
    p(o_{t+1}|o_{0 \to t},do(a_{0 \to t})) &= p(o_{t+1}|o_{0 \to t},do(a_{0 \to t}),i=1) \\
    &= p(o_{t+1}|h_t,a_t,i=1)
\end{align*}
(R1 of do-calculus, then R2 on the contextual causal DAG from Figure~\ref{fig:pomdp-unc}).

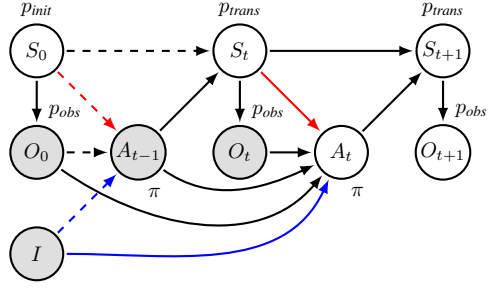
\begin{figure}[tbp]
\centering
\begin{tikzpicture}[scale=1.35, every node/.style={scale=0.8}]
    \node[node,label={$p_\textit{init}$}] (S0) at (-2, 0) {$S_0$};
    \node[node,label={$p_\textit{trans}$}] (St) at (0, 0) {$S_t$};
    \node[node,label={$p_\textit{trans}$}] (Stp1) at (2, 0) {$S_{t+1}$};
    \node[node,label={80:$p_\textit{obs}$},fill=gray!25] (O0) at (-2, -1) {$O_0$};
    \node[node,label={80:$p_\textit{obs}$},fill=gray!25] (Ot) at (0, -1) {$O_t$};
    \node[node,label={80:$p_\textit{obs}$}] (Otp1) at (2, -1) {$O_{t+1}$};
    \node[node,label={275:$\pi$}] (At) at (1, -1) {$A_t$};
    \node[node,label={275:$\pi$},fill=gray!25] (Atm1) at (-1, -1) {$A_{t-1}$};
    \node[node,fill=gray!25] (I) at (-2, -2) {$I$};
    \draw[edge.d] (St) to (Stp1);
    \draw[edge.d] (St) to (Ot);
    \draw[edge.d, red] (St) to (At);
    \draw[edge.d] (Stp1) to (Otp1);
    \draw[edge.d] (S0) to (O0);
    \draw[edge.d, dashed] (S0) to (St);
    \draw[edge.d, dashed, red] (S0) to (Atm1);
    \draw[edge.d, dashed] (O0) to (Atm1);
    \draw[edge.d] (Atm1) to (St);
    \draw[edge.d] (Ot) to (At);
    \draw[edge.d] (At) to (Stp1);
    \draw[edge.d] (Atm1) to[out=-35, in=-155] (At);
    \draw[edge.d] (O0) to[out=-35, in=-130] (At);
    \draw[edge.d, dashed, blue] (I) to (Atm1);
    \draw[edge.d, blue] (I) to[out=0, in=245] (At);
\end{tikzpicture}
\caption{augmented POMDP setting, with a policy regime indicator $I$ taking values in $\{0,1\}$ (1=interventional regime, no confounding, 0=observational regime, potential confounding), such that $\pi(a_t|h_t,s_t,i=1)=\pi(a_t|h_t,i=1)$. This additional constraint introduces a contextual independence $A_t \indep S_t \mid H_t,I=1$.}
\label{fig:pomdp-hybrid}
\end{figure}

\subsection{The augmented learning problem}

In order to learn the standard POMDP transition model $p(o_{t+1}|h_t,a_t,i=1)$ from the augmented dataset $\mathcal{D}_{\textit{obs}} \cup  \mathcal{D}_{\textit{int}}=\mathcal{D} \sim p(\tau,i)$, we propose the following two-step procedure.

\textbf{Learning}: In the first step, we fit a latent probabilistic model $\hat{q}$ to the training trajectories, constrained to respect all the independencies of our augmented POMDP. This learning problem formulates as
\begin{equation}
    \label{eq:latent-learning}
    \hat{q} = \argmax_{q \in \mathcal{Q}} \sum_{(\tau, i)}^\mathcal{D} \log q(\tau,i)
    \text{,}
\end{equation}
with $\mathcal{Q}$ the family of sequential latent probabilistic models $q$ that respect
\begin{equation*}
    \begin{gathered}
    q(\tau,i) = q(i) \sum_{z_{0 \to |\tau|}}^{\mathcal{Z}^{|\tau|+1}} q(z_0) q(o_0|z_0) \prod_{t=0}^{|\tau|-1} q(a_t|h_t,z_t,i) q(z_{t+1}|a_t,z_t) q(o_{t+1}|z_{t+1})
    \text{,} \\
    q(a_t|h_t,z_t,i=1) = q(a_t|h_t,i=1)
    \text{,}
    \end{gathered}
\end{equation*}
and $\mathcal{Z}$ the discrete latent space of the model, i.e., $z_t \in \mathcal{Z}$.

It is straightforward to see that our learning problem~(\ref{eq:latent-learning}) is that of a structured latent variable model. While the problem of learning structured latent variable models is known to be hard in general, there exists a wide range of tools and algorithms available in the literature for solving it approximately, such as the EM algorithm or the method of ELBO maximization.

\textbf{Inference}: In the second step, we recover $\hat{q}(o_{t+1}|h_t,a_t,i=1)$ as an estimator of the standard POMDP transition model. This can be done efficiently by unrolling a forward algorithm over the augmented DAG structure (see appendix for details).

Intuitively, the observational data $\mathcal{D}_\textit{obs}$ influences the interventional transition model $q(o_{t+1}|h_t,a_t,i=1)$ as follows. The learned model $q$ must fit the observational and interventional data by sharing the same building blocs $q(z_0)$, $q(o_t|z_t)$ and $q(z_{t+1}|z_t,a_t)$, and only the expert policy $q(a_t|h_t,z_t,i=0)$ offers some flexibility that allows it to differentiate between both regimes. As a result, imposing a distribution for $q(\tau|i=0)$ acts as a regularizer for $q(\tau|i=1)$.

\subsection{Theoretical guarantees}

In this section we show that our two-step approach is 1) correct, in the sense that it yields an unbiased estimator of the standard POMDP causal transition model and 2) efficient, in the sense that it yields a better estimator than the one based on interventional data only (asymptotically in the number of observational data).
First, we show that the recovered estimator is unbiased.

\begin{restatable}{proposition}{propunbiased}
\label{prop:unbiased}
Assuming $|\mathcal{Z}| \geq |\mathcal{S}|$, $\hat{q}(o_{t+1}|h_t,a_t,i=1)$ is an unbiased estimator of $p(o_{t+1}|h_t,a_t,i=1)$.
\end{restatable}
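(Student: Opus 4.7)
The plan is to establish, in three steps, that (i) the true data-generating distribution $p(\tau,i)$ belongs to the model family $\mathcal{Q}$; (ii) the MLE objective in~(\ref{eq:latent-learning}) therefore recovers this joint on the observable level (asymptotically in $|\mathcal{D}|$); and (iii) the target interventional conditional $p(o_{t+1}|h_t,a_t,i=1)$ is a simple functional of this observable joint, hence is recovered as well. Throughout, ``unbiased'' is read in the population-MLE sense consistent with the companion efficiency result stated afterwards.

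First I would exhibit an explicit $q^\star \in \mathcal{Q}$ that induces the same joint over $(\tau,i)$ as $p$. Since $|\mathcal{Z}| \geq |\mathcal{S}|$, one can embed $\mathcal{S}$ into $\mathcal{Z}$ and set $q^\star(z_0)=p_\textit{init}(s_0)$, $q^\star(o_t|z_t)=p_\textit{obs}(o_t|s_t)$, $q^\star(z_{t+1}|z_t,a_t)=p_\textit{trans}(s_{t+1}|s_t,a_t)$, $q^\star(a_t|h_t,z_t,i)=\pi(a_t|h_t,s_t,i)$ and $q^\star(i)=p(i)$. The contextual-independence constraint $q^\star(a_t|h_t,z_t,i=1)=q^\star(a_t|h_t,i=1)$ is exactly the defining property of the standard-regime policy $\pi_\textit{std}$, so $q^\star$ is a legitimate member of $\mathcal{Q}$ and by construction satisfies $q^\star(\tau,i)=p(\tau,i)$ on the whole support.

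Next I would invoke the standard population-level MLE argument: maximising $\mathbb{E}_{(\tau,i)\sim p}[\log q(\tau,i)]$ over $q \in \mathcal{Q}$ is equivalent to minimising $\mathrm{KL}(p \,\|\, q)$, which is non-negative and vanishes if and only if the observable marginals coincide. Since $q^\star$ attains this bound and lies in $\mathcal{Q}$, any population maximiser $\hat{q}$ of~(\ref{eq:latent-learning}) must also satisfy $\hat{q}(\tau,i)=p(\tau,i)$ on the observables, even though the latent representation itself remains non-unique (identifiable only up to permutations of $\mathcal{Z}$ and other relabellings preserving the observable factorisation).

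Finally, since conditioning and marginalising are well-defined functionals of the observable joint, for every $(h_t,a_t)$ in the support of $p(\cdot|i=1)$ one immediately obtains $\hat{q}(o_{t+1}|h_t,a_t,i=1)=p(o_{t+1}|h_t,a_t,i=1)$. The main obstacle I anticipate is precisely this latent non-identifiability: the proof has to be organised so that agreement on the observable joint is sufficient, and the inference step (the forward pass over the augmented DAG alluded to after~(\ref{eq:latent-learning})) must be shown to depend only on these observable marginals, so that the recovered causal transition is the same for every global optimiser irrespective of which latent encoding the learner lands on.
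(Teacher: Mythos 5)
Your proposal is correct and follows essentially the same route as the paper's own proof: data sampled from $p(\tau,i)$, realizability $p \in \mathcal{Q}$ via $|\mathcal{Z}| \geq |\mathcal{S}|$, and hence the maximizer of~(\ref{eq:latent-learning}) recovers the joint and in particular the conditional $p(o_{t+1}|h_t,a_t,i=1)$. Your version merely spells out what the paper leaves terse (the explicit embedding $q^\star$, the KL/population-MLE step, and the observation that latent non-identifiability is harmless because the target is a functional of the observable joint), which is a faithful elaboration rather than a different argument.
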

\begin{proof}
The proof is straightforward. First, we have that $\mathcal{D} \sim p(\tau,i)$. Second, we have $p \in \mathcal{Q}$, because $\mathcal{Q}$ is only restricted to the augmented POMDP constraints, and because its latent space is sufficiently large ($|\mathcal{Z}| \geq |\mathcal{S}|$). Therefore, $\hat{q}(\tau,i)$ solution of (\ref{eq:latent-learning}) is an unbiased estimator of $p(\tau,i)$, and in particular $\hat{q}(o_{t+1}|h_t,a_t,i=1)$ is an unbiased estimator of $p(o_{t+1}|h_t,a_t,i=1)$.
\end{proof}

Second, we provide bounds on $\hat{q}(o_{t+1}|h_t,a_t,i=1)$ in the asymptotic scenario $|\mathcal{D}_{\textit{obs}}| \to \infty$ (regardless of the interventional data $\mathcal{D}_{\textit{int}}$).

\begin{restatable}{theorem}{thbounds}
\label{th:bounds}
Assuming $|\mathcal{D}_{\textit{obs}}| \to \infty$, for any $\mathcal{D}_{\textit{int}}$ the recovered causal model is bounded as follows:
\begin{align*}
    \prod_{t=0}^{T-1} \hat{q}(o_{t+1}|h_t,a_t,i=1) & \geq \prod_{t=0}^{T-1} p(a_{t}|h_{t},i=0) p(o_{t+1}|h_t,a_t,i=0)
    \text{, and}\\
    \prod_{t=0}^{T-1} \hat{q}(o_{t+1}|h_{t},a_{t},i=1)
    & \leq \prod_{t=0}^{T-1} p(a_{t}|h_{t},i=0) p(o_{t+1}|h_{t},a_{t},i=0) + 1 - \prod_{t=0}^{T-1} p(a_{t}|h_{t},i=0)
    \text{,}
\end{align*}
$\forall h_{T-1},a_{T-1},T\geq 1$ where $p(h_{T-1},a_{T-1},i=0) > 0$.
\end{restatable}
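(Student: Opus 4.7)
The plan is to fix a trajectory $(o_0,a_{0:T-1},o_{1:T})$ with $p(h_{T-1},a_{T-1},i=0)>0$ and express both sides of each inequality as sums over a common latent-path integrand. The assumption $|\mathcal{D}_{\textit{obs}}|\to\infty$ makes the observational part of the log-likelihood in (\ref{eq:latent-learning}) dominate, so in the limit the MLE must satisfy $\hat{q}(\tau|i=0)=p(\tau|i=0)$ for every $\tau$ (any finite $\mathcal{D}_{\textit{int}}$ contributes a vanishing term, and by the argument of Proposition~\ref{prop:unbiased} some such $\hat{q}$ lies in $\mathcal{Q}$).

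First I would rewrite the left-hand side. Using R2/R3 of do-calculus on the augmented DAG together with $\hat{q}(a_t|h_t,z_t,i=1)=\hat{q}(a_t|h_t,i=1)$, the product $\prod_{t}\hat{q}(o_{t+1}|h_t,a_t,i=1)$ equals $\hat{q}(o_{1:T}|o_0,do(a_{0:T-1}))$, which by the latent factorisation is $A := \sum_{z_{0:T}} G(z_{0:T})$ with $G(z_{0:T})=\hat{q}(z_0|o_0)\prod_{t}\hat{q}(z_{t+1}|z_t,a_t)\hat{q}(o_{t+1}|z_{t+1})$. Applying the chain rule in the observational regime and substituting the same latent factorisation turns the right-hand side of the first inequality into $B := \sum_{z_{0:T}} G(z_{0:T})\prod_t\psi_t(z_t)$, with $\psi_t(z_t):=\hat{q}(a_t|h_t,z_t,i=0)\in[0,1]$. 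The lower bound $A\ge B$ is then immediate from $G\ge 0$ and $\prod_t\psi_t\le 1$.

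For the upper bound, the key tool is the telescoping identity
\[
1-\prod_{t=0}^{T-1}\psi_t(z_t) \;=\; \sum_{t=0}^{T-1}\Bigl(\prod_{s<t}\psi_s(z_s)\Bigr)\bigl(1-\psi_t(z_t)\bigr),
\]
which splits $A-B=\sum_z G(z)(1-\prod_t\psi_t)$ into $T$ non-negative summands. In each one, I would first sum out the future latents $z_{t+1:T}$: the resulting factor is $\hat{q}(o_{t+1:T}|z_t,do(a_{t:T-1}))\le 1$ and can be dropped. What remains reassembles as $\sum_{z_t}\hat{q}(z_t,h_t|o_0,i=0)(1-\psi_t(z_t)) = p(h_t|o_0,i=0)(1-\pi_t)$, where $\pi_t:=p(a_t|h_t,i=0)$; substituting $p(h_t|o_0,i=0)=\prod_{s<t}\pi_s\rho_s\le\prod_{s<t}\pi_s$ (from $\rho_s:=p(o_{s+1}|h_s,a_s,i=0)\le 1$) bounds each summand by $(\prod_{s<t}\pi_s)(1-\pi_t)$. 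A second telescoping in $t$ collapses the sum of these upper bounds to $1-\prod_t\pi_t$, exactly matching the statement.

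The main obstacle lies in choosing a decomposition tight enough to reach $1-\prod_t\pi_t$ rather than something weaker. A naive marginal argument — summing the non-negative gap $A-B$ over all $o_{1:T}'$ — only yields $A-B\le 1-p(a_{0:T-1}|o_0,i=0)$, which can be strictly weaker. A naive induction is also tempting but fails, because a time-shifted one-step version of the bound does not hold at $t>0$ once confounding makes the interventional and observational posteriors over $z_t$ given $h_t$ disagree. The per-$t$ latent decomposition, which defers the use of $\rho_s\le 1$ and of $\hat{q}(o_{t+1:T}|z_t,do(a_{t:T-1}))\le 1$ until the very last step within each summand, is what extracts the tight pointwise bound.
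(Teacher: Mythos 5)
Your proposal is correct and follows essentially the same route as the paper's proof: the same latent-path identity $\prod_t \hat{q}(o_{t+1}|h_t,a_t,i=1)=\sum_{z_{0\to T}}\hat{q}(z_0|o_0)\prod_t\hat{q}(z_{t+1}|z_t,a_t)\hat{q}(o_{t+1}|z_{t+1})$, the same comparison with the observational factorisation weighted by $\psi_t(z_t)=\hat q(a_t|h_t,z_t,i=0)$, the same split of $1-\prod_t\psi_t$ by the first deviating action (your telescoping identity is exactly the paper's case analysis $a'_0\neq a_0$, then $a'_0=a_0\wedge a'_1\neq a_1$, etc.), and the same asymptotic step $\hat q(\tau|i=0)=p(\tau|i=0)$. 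The only cosmetic differences are that you obtain the key identity via do-calculus/truncated factorisation on the model DAG where the paper uses a recursive forward decomposition with $d$-separation, and you discard the $p(o_{s+1}|h_s,a_s,i=0)\le 1$ factors slightly earlier.
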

\begin{proof}
See appendix.
\end{proof}

As a direct consequence, in the asymptotic case, using (infinite) observational data ensures stronger generalization guarantees for the recovered transition model than using no observational data.

\begin{restatable}{corollary}{corgeneralization}
\label{cor:generalization}
The estimator $\hat{q}(o_{t+1}|h_t,a_t,i=1)$, recovered after solving (\ref{eq:latent-learning}) with $|\mathcal{D}_\textit{obs}| \to \infty$, offers strictly better generalization guarantees than the one with $|\mathcal{D}_\textit{obs}|=0$, for any $\mathcal{D}_\textit{int}$.
\end{restatable}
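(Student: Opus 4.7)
The plan is to interpret Corollary~\ref{cor:generalization} as a comparison between worst–case bounds on the learned transition probability at arbitrary history–action pairs. The key observation is that the upper bound of Theorem~\ref{th:bounds} minus its lower bound equals $1 - \prod_{t=0}^{T-1} p(a_t|h_t,i=0)$, which is strictly less than $1$ whenever the observational regime assigns positive mass to the queried sequence of actions. I would contrast this with the situation $|\mathcal{D}_{\textit{obs}}|=0$, in which the bounds degenerate to the trivial interval $[0,1]$ at any $(h_t,a_t)$ outside the support of $\mathcal{D}_{\textit{int}}$, and argue that the former strictly refines the latter.

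\textbf{Key steps.} First, I would observe that with $\mathcal{D}_{\textit{obs}} = \emptyset$ the objective in (\ref{eq:latent-learning}) reduces to a sum over interventional trajectories, so any two elements of $\mathcal{Q}$ that agree on the empirical distribution of $\mathcal{D}_{\textit{int}}$ are equally optimal; consequently $\hat{q}(o_{t+1}|h_t,a_t,i=1)$ is entirely unconstrained on any $(h_t,a_t)$ not appearing in $\mathcal{D}_{\textit{int}}$, and the best worst-case guarantee one can state is the trivial $[0,1]$ interval. Second, I would invoke Theorem~\ref{th:bounds}: in the limit $|\mathcal{D}_{\textit{obs}}| \to \infty$, at every $(h_{T-1},a_{T-1})$ with $p(h_{T-1},a_{T-1},i=0) > 0$ the product $\prod_{t=0}^{T-1} \hat{q}(o_{t+1}|h_t,a_t,i=1)$ is pinned to an interval of width $1 - \prod_{t=0}^{T-1} p(a_t|h_t,i=0) < 1$. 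Third, I would note that this family of history–action pairs generically includes many pairs absent from $\mathcal{D}_{\textit{int}}$, so the infinite-observational asymptotic guarantee is strictly tighter than the interventional-only guarantee at those points, and no worse elsewhere.

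\textbf{Main obstacle.} Since the heavy lifting is already done in Theorem~\ref{th:bounds}, the only real difficulty lies in making the informal phrase ``strictly better generalization guarantees'' precise enough to justify the word \emph{strictly}. I would handle this by formulating the comparison in terms of the adversarial completion of the learned model consistent with the training data: in the interventional-only regime the adversary can place any value in $[0,1]$ on unobserved support, whereas in the infinite-observational regime the adversary is confined to the strictly narrower interval of Theorem~\ref{th:bounds}. A minor technicality is handling the degenerate cases $\prod_{t=0}^{T-1} p(a_t|h_t,i=0) \in \{0,1\}$: in the former the observational bound becomes vacuous and matches (rather than strictly improves) the interventional-only guarantee, while in the latter the two regimes coincide at the query point and identifiability follows directly, so neither case weakens the strict improvement claimed on the non-degenerate portion of the augmented support.
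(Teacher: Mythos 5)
Your proposal rests on the same engine as the paper --- Theorem~\ref{th:bounds} turning the asymptotic observational constraint into non-trivial restrictions on $\hat{q}(o_{t+1}|h_t,a_t,i=1)$ --- but the framing differs. The paper argues at the level of model families, independently of the data: since some $(h_{T-1},a_{T-1})$ has positive observational probability, the lower bound of Theorem~\ref{th:bounds} forces strict positivity of the recovered transition probability at some $o_T$, so the constrained family $\{q(o_{t+1}|h_t,a_t,i=1)\mid q\in\mathcal{Q},\,q(\tau|i=0)=p(\tau|i=0)\}$ is a \emph{strict subset} of the unrestricted $\{q(o_{t+1}|h_t,a_t,i=1)\mid q\in\mathcal{Q}\}$, and a strictly smaller hypothesis class is read as strictly better generalization guarantees, trivially ``for any $\mathcal{D}_\textit{int}$''. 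You instead compare worst-case intervals on the learned quantities after conditioning on the training data (an adversarial-completion view), using the full width $1-\prod_{t}p(a_t|h_t,i=0)$ rather than just the positivity of the lower bound; this is arguably a more quantitative reading of ``better guarantees'' and your observation that the width is automatically $<1$ wherever Theorem~\ref{th:bounds} applies (since $p(h_{T-1},a_{T-1},i=0)>0$ already forces $\prod_t p(a_t|h_t,i=0)>0$) is correct. The one weak point is your reliance on the observational support ``generically'' containing pairs absent from $\mathcal{D}_\textit{int}$: the corollary quantifies over \emph{any} $\mathcal{D}_\textit{int}$, including datasets that happen to visit every history--action pair in the observational support, and then your strictness argument as stated has no witness. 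The paper's data-independent strict-subset argument sidesteps this entirely; alternatively, you could repair your version by noting that a finite $\mathcal{D}_\textit{int}$ never pins the model exactly even on its support (maximizing (\ref{eq:latent-learning}) over interventional data alone leaves a nondegenerate consistent set there too), so the observational interval still strictly shrinks the admissible set at some point with $p(h_{T-1},a_{T-1},i=0)>0$. With that fix, your argument delivers the same conclusion and in fact gives a sharper, pointwise quantification of the improvement than the paper's qualitative set-inclusion.
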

\begin{proof}
There exists at least one history-action couple $(h_{T-1},a_{T-1})$, $T\geq1$, that has non-zero probability in the observational regime. This ensures that there exists a value $o_{T}$ for which $\prod_{t=0}^{T-1} p(a_t|h_{t},i=0) p(o_{t+1}|h_{t},a_t,i=0)$ is strictly positive, which in turn ensures $\hat{q}(o_{T+1}|h_T,a_T,i=1) > 0$. As a result, the family of models $\{q(o_{t+1}|h_t,a_t,i=1)\mid q \in \mathcal{Q}, q(\tau|i=0)=p(\tau|i=0)\}$ is a strict subset of the unrestricted family $\{q(o_{t+1}|h_t,a_t,i=1)\mid q \in \mathcal{Q}\}$, and thus offers strictly better generalization guarantees.
\end{proof}

\begin{example}
Let us now look at our door example in light of Theorem~\ref{th:bounds}. Assume this time that I observe many $(\textit{button},\textit{door})$ interactions from a non-colorblind person ($i=0$), who's policy is $\pi(\textit{button=A}|\textit{light=red})=0.9$ and $\pi(\textit{button=A}|\textit{light=green})=0.4$. Then I can already infer from Theorem~\ref{th:bounds} that $p(\textit{door=open}|do(\textit{button=A})) \in [0.54, 0.84]$ and $p(\textit{door=open}|do(\textit{button=B})) \in [0.24, 0.94]$. I now get a chance to interact with the door ($i=1$), and I decide to press $A$ 10 times and $B$ 10 times. I am unlucky, and my interventional study results in the following probabilities: $q(\textit{door=open}|do(\textit{button=A}))=0.5$ and $q(\textit{door=open}|do(\textit{button=B}))=0.5$. This does not coincide with my (reliable) observational study, and therefore I adjust $q(\textit{door=open}|do(\textit{button=A}))$ to its lower bound $0.54$. I now believe that pressing $A$ is more likely to be my optimal strategy.
\end{example}

\section{Experiments}
\label{sec:experiments}

We perform experiments on two synthetic toy problems, our \emph{door} bandit problem described earlier (Sections~\ref{sec:causal-rl}-\ref{sec:contribution}), and the \emph{tiger} problem from the literature \citep{conf/aaai/CassandraKL94} with a horizon $H=50$.
In both experiments we consider a uniform standard policy $\pi_\textit{std}$ and a good but noisy expert $\pi_\textit{prv}$ (see appendix for details).
To assess the performance of our method, we consider a large observational dataset $\mathcal{D}_\textit{obs}$ (512 samples), and an interventional dataset $\mathcal{D}_\textit{int}$ of varying size. We then compare the performance of the transition model $\hat{q}$ learned in three different settings: \emph{no obs}, where only interventional data ($\mathcal{D}=\mathcal{D}_\textit{int}$) is used for training; \emph{naive}, where observational data is naively combined with interventional data as if there was no confounding ($\mathcal{D}=\mathcal{D}_\textit{int} \cup \{(\tau,1)|(\tau,i) \in \mathcal{D}_\textit{obs})\}$); and \emph{augmented}, our method ($\mathcal{D}=\mathcal{D}_\textit{int} \cup \mathcal{D}_\textit{obs}$). The only difference between each setting is the training dataset. We learn $\hat{q}$ by minimizing (\ref{eq:latent-learning}) directly via stochastic gradient descent, and we use discrete probability tables for the building blocs of our transition model, $q(z_0)$, $q(o_t|z_t)$, $q(z_{t+1}|z_t,a_t)$, and $q(a_t|h_t,z_t,i=0)$, with a low-dimensional latent space $|\mathcal{Z}|=32$.

In Figure~\ref{fig:results} we report the Jensen-Shannon (JS) divergence between the recovered $\hat{q}(o_{t+1}|h_t,a_t,i=1)$ and the true transition model $p(o_{t+1}|h_t,do(a_t))$, and also the expected reward obtained when the model is used for planning. In the \emph{door} problem planning is trivial, and we compute both the JS and expected reward exactly. In the \emph{tiger} problem we use the recovered model to train a ``dreamer'' RL agent on imaginary samples $\tau \sim \hat{q}(\tau|i=1)$ from the model, using the belief states $\hat{q}(s_t|h_t)$ as features, and we compute both the JS and expected reward using a stochastic approximation (details in the appendix). As can be seen, in both experiments our method (\emph{augmented}) nicely leverages the observational data and converges faster than when no observational data is used (\emph{no obs}), or when it is used in a way that disregards a potential confounding issue (\emph{naive}). We also perform additional experiments in situations where the observed agent is perfectly good, perfectly bad, random, or strongly biased towards or against the optimal action (appendix, Section~\ref{sec:additional_results}). In all cases our method successfully leverages the observational data, and outperforms or matches the performance of the other methods. Code for reproducing our experiments is made available online\footnote{\url{https://github.com/causal-rl-anonymous/causal-rl}}.

\begin{figure}
    \centering
    \includegraphics[height=2.8cm]{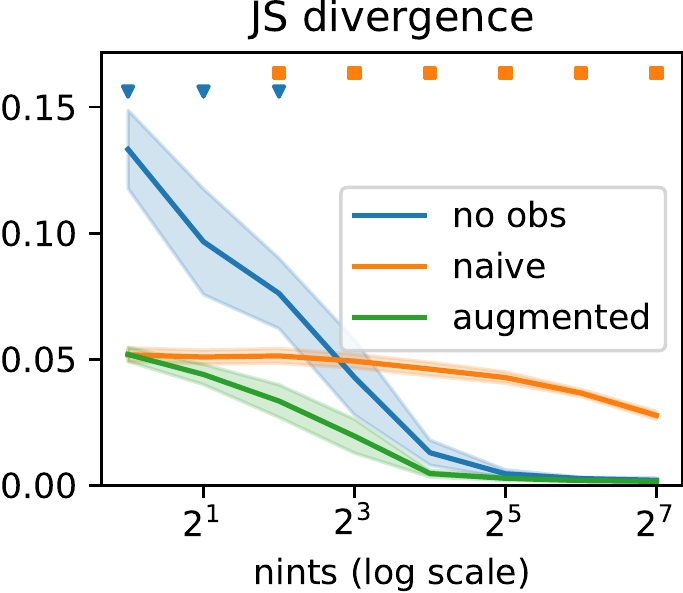}
    \hspace*{\fill}
    \includegraphics[height=2.8cm]{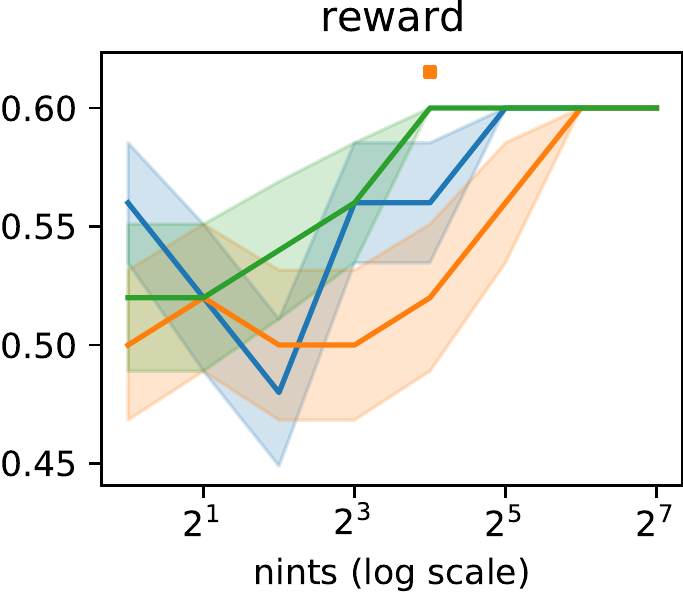}
    \hspace*{\fill}
    \includegraphics[height=2.8cm]{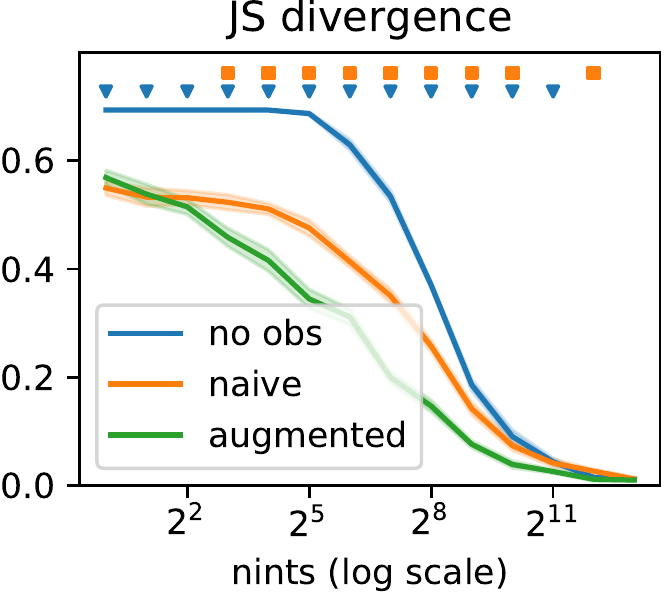}
    \hspace*{\fill}
    \includegraphics[height=2.8cm]{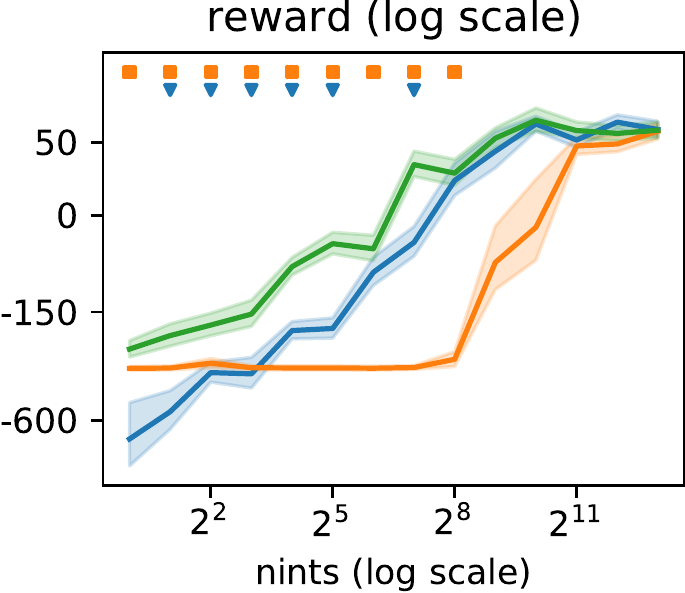}
    \caption{\textbf{Left}: our first toy problem \emph{door} (bandit). \textbf{Right}: our second toy problem \emph{tiger} (POMDP). We report both the mean and the standard error of the JS divergence (lower the better) and the expected reward (higher the better) over 10 seeds. In both cases the number of observational data is fixed to a large sample size (512), while the number of interventional data grows exponentially. Our augmented method shows the best sample-efficiency in both metrics. We report the significance of a Welch’s t-test ($\alpha<5\%$) versus the two other baselines \emph{no obs} and \emph{naive}, with triangle and square markers respectively.
    The high initial reward for \emph{no obs} in the bandit experiment can be attributed to the initial model parameters (prior) which have a high impact in the very low sample regime (1 or 2).
    }
    \label{fig:results}
\end{figure}

\section{Related work}
\label{sec:relatedworks}

A whole body of work exists around the question of merging interventional and observational data in RL. \citet{conf/nips/BareinboimFP15} study a sequential decision problem similar to ours, but assume that expert intentions are observed both in the interventional and the observational regimes, i.e., prior to doing interventions the learning agent can ask ``what would the expert do in my situation ?'' This introduces an intermediate, observed variable $\hat{a}_t = f(o_t)$ with the property that $p_\textit{prv}(a_t=\hat{a}_t|\hat{a}_t)=1$, which guarantees unconfoundedness in the observational regime ($A_t \indep S_t | H_t$), so that observational data can be considered interventional, and the standard PO-MDP transition model can be directly estimated via (\ref{eq:direct-learning}). \citet{conf/ijcai/ZhangB17,conf/aaai/ZhangB21} relax this assumption in the context of binary bandits, and later on in the more general context of dynamic treatment regimes \citep{conf/neurips/ZhangB19,conf/icml/ZhangB20}. They derive causal bounds similar to ours (Theorem~\ref{th:bounds}), and propose a two-step approach which first extracts causal bounds from observational data, and then uses these bounds in an online RL algorithm. While their method nicely tackles the question of leveraging observational data for online exploration, it does not account for uncertainty in the bounds estimated from the observational data. In comparison, our latent-based approach is more flexible, as it never computes explicit bounds, but rather lets the learning agent decide through (\ref{eq:latent-learning}) how data from both regimes influence the final transition model, depending of the number of samples available.
\citet{conf/nips/KallusPS18} also propose a two-step learning procedure to combine observational and interventional data in the context of binary contextual bandits. Their method however relies on a series of strong parametric assumptions (strong one-way overlap, linearity, non-singularity, finite fourth moment, strong overlapping).

A specific instantiation of our framework is off-policy evaluation, i.e., estimating the performance of a policy $\pi$ using observational data only. This corresponds to the specific setting $|\mathcal{D}_\textit{int}|=0$, where it can be shown that the causal transition model is in general not recoverable in the presence of confounding variables. Still, a growing body of literature studies the question under specific structural or parametric assumptions \citep{arxiv/LuSHLJ18,conf/aaai/TennenholtzSM20,conf/aistats/BennettKLM21}.

In the context of imitation learning, \citet{conf/nips/HaanJL19} attribute the issue of \emph{causal misidentification}, that is, ascribing the actions of an agent to the wrong explanatory variables, to confounding. We argue that this explanation is erroneous, since their imitated experts are trained in the standard POMDP setting (interventional regime). This reasoning supports \citet{arxiv/SpencerCVZB21}, who shows that \emph{causal misidentification} is simply a manifestation of \emph{covariate shift}.

\section{Discussions}
\label{sec:discussions}

In this paper we have presented a simple, generic method for combining interventional and observational (potentially confounded) data in model-based reinforcement learning for POMDPs. We have demonstrated that our method is correct and efficient in the asymptotic case (infinite observational data), and we have illustrated its effectiveness on two synthetic toy problems. One limitation of our method is that it adds an additional challenge on top of model-based RL, that of learning a latent-based transition model. Still, the recent success of discrete latent models for solving complex RL tasks \citep{conf/iclr/HafnerLNB21} or tasks in high-dimensional domains \citep{arxiv/RameshPGGVRCS21} lets us envision that this difficulty can be overcome in practice.
A first potential extension to our work could be to use offline data to guide online exploration, in a fashion similar to \citet{conf/ijcai/ZhangB17,conf/neurips/ZhangB19,conf/icml/ZhangB20,conf/aaai/ZhangB21}. We envision that our latent-based transition model could be updated each time a new interventional data comes in, and then existing exploration schemes for model-based RL could be used.
A second direct extension to our method is to consider that several agents are observed, each with its own privileged policy, leading to multiple observational regimes. This would lead, in the asymptotic case, to a stronger implicit regularizer for the causal transition model.
A third, obvious extension would be to develop a similar approach for model-free RL, maybe in the form of a value-function regularizer.
Finally, we hope that our work will help to bridge the gap between the RL and causality communities, and will convince the RL community that causality is an adequate tool to reason about observational data, which is plentiful in the world.

\section{Acknowledgements}

We thank David Berger for interesting discussions and for pointing us to relevant bodies of work.

This work was supported by the Canada First Research Excellence Fund (CFREF), Canada Excellence Research Chairs (CERC), Calcul Québec\footnote{https://www.calculquebec.ca}, Compute Canada\footnote{https://www.computecanada.ca}, and the DEpendable Explainable Learning (DEEL) french-canadian research project \footnote{https://www.deel.ai}.

\bibliography{arxiv_main}

\begin{thebibliography}{28}
\providecommand{\natexlab}[1]{#1}
\providecommand{\url}[1]{\texttt{#1}}
\expandafter\ifx\csname urlstyle\endcsname\relax
  \providecommand{\doi}[1]{doi: #1}\else
  \providecommand{\doi}{doi: \begingroup \urlstyle{rm}\Url}\fi

\bibitem[Bareinboim et~al.(2015)Bareinboim, Forney, and
  Pearl]{conf/nips/BareinboimFP15}
Elias Bareinboim, Andrew Forney, and Judea Pearl.
\newblock Bandits with unobserved confounders: A causal approach.
\newblock In \emph{NIPS}, 2015.

\bibitem[Bennett et~al.(2021)Bennett, Kallus, Li, and
  Mousavi]{conf/aistats/BennettKLM21}
Andrew Bennett, Nathan Kallus, Lihong Li, and Ali Mousavi.
\newblock Off-policy evaluation in infinite-horizon reinforcement learning with
  latent confounders.
\newblock In \emph{AISTATS}, 2021.

\bibitem[Cassandra et~al.(1994)Cassandra, Kaelbling, and
  Littman]{conf/aaai/CassandraKL94}
Anthony~R. Cassandra, Leslie~P Kaelbling, and Michael~L. Littman.
\newblock Acting optimally in partially observable stochastic domains.
\newblock In \emph{AAAI}, 1994.

\bibitem[Dawid(2020)]{arxiv/Dawid20}
A.~Philip Dawid.
\newblock Decision-theoretic foundations for statistical causality, 2020.

\bibitem[de~Haan et~al.(2019)de~Haan, Jayaraman, and
  Levine]{conf/nips/HaanJL19}
Pim de~Haan, Dinesh Jayaraman, and Sergey Levine.
\newblock Causal confusion in imitation learning.
\newblock In \emph{NeurIPS}, 2019.

\bibitem[Gershman(2017)]{chapter/Gershman17}
Samuel~J Gershman.
\newblock Reinforcement learning and causal models.
\newblock In Michael~R. Waldmann, editor, \emph{The Oxford handbook of causal
  reasoning}, chapter~10, pages 295--306. Oxford University Press, 2017.

\bibitem[Hafner et~al.(2021)Hafner, Lillicrap, Norouzi, and
  Ba]{conf/iclr/HafnerLNB21}
Danijar Hafner, Timothy~P. Lillicrap, Mohammad Norouzi, and Jimmy Ba.
\newblock Mastering atari with discrete world models.
\newblock In \emph{ICLR}, 2021.

\bibitem[Huang and Valtorta(2006)]{conf/uai/HuangV06}
Yimin Huang and Marco Valtorta.
\newblock Pearl's calculus of intervention is complete.
\newblock In \emph{UAI}, 2006.

\bibitem[Imbens and Rubin(2015)]{book/ImbensRubin15}
Guido~W. Imbens and Donald~B. Rubin.
\newblock \emph{Causal Inference for Statistics, Social, and Biomedical
  Sciences: An Introduction}.
\newblock Cambridge University Press, 2015.

\bibitem[Kallus et~al.(2018)Kallus, Puli, and Shalit]{conf/nips/KallusPS18}
Nathan Kallus, Aahlad~Manas Puli, and Uri Shalit.
\newblock Removing hidden confounding by experimental grounding.
\newblock In \emph{NeurIPS}, 2018.

\bibitem[Kingma and Ba(2015)]{conf/iclr/KingmaB14}
Diederik~P. Kingma and Jimmy Ba.
\newblock Adam: A method for stochastic optimization.
\newblock In \emph{ICLR}, 2015.

\bibitem[Koller and Friedman(2009)]{books/KollerF09}
Daphne Koller and Nir Friedman.
\newblock \emph{{Probabilistic Graphical Models - Principles and Techniques.}}
\newblock MIT Press, 2009.

\bibitem[Lange et~al.(2012)Lange, Gabel, and Riedmiller]{journal/alo/LangeGR12}
Sascha Lange, Thomas Gabel, and Martin~A. Riedmiller.
\newblock Batch reinforcement learning.
\newblock In \emph{Reinforcement Learning}, volume~12 of \emph{Adaptation,
  Learning, and Optimization}, pages 45--73. Springer, 2012.

\bibitem[Levine et~al.(2020)Levine, Kumar, Tucker, and Fu]{levine2020offline}
Sergey Levine, Aviral Kumar, George Tucker, and Justin Fu.
\newblock Offline reinforcement learning: Tutorial, review, and perspectives on
  open problems.
\newblock \emph{arXiv preprint}, 2020.

\bibitem[Lu et~al.(2018)Lu, Schölkopf, and Hernández-Lobato]{arxiv/LuSHLJ18}
Chaochao Lu, Bernhard Schölkopf, and José~Miguel Hernández-Lobato.
\newblock Deconfounding reinforcement learning in observational settings.
\newblock \emph{arXiv preprint}, 2018.

\bibitem[Pearl(1989)]{books/Pearl89}
Judea Pearl.
\newblock \emph{{Probabilistic reasoning in intelligent systems - networks of
  plausible inference.}}
\newblock Morgan Kaufmann, 1989.

\bibitem[Pearl(2009{\natexlab{a}})]{book/Pearl09}
Judea Pearl.
\newblock \emph{Causality}.
\newblock Cambridge University Press, Cambridge, UK, 2 edition,
  2009{\natexlab{a}}.

\bibitem[Pearl(2009{\natexlab{b}})]{journal/statsur/Pearl09}
Judea Pearl.
\newblock {Causal inference in statistics: An overview}.
\newblock \emph{Statistics Surveys}, 3:\penalty0 96 -- 146, 2009{\natexlab{b}}.

\bibitem[Pearl(2012)]{conf/uai/Pearl12}
Judea Pearl.
\newblock The do-calculus revisited.
\newblock In \emph{UAI}, 2012.

\bibitem[Ramesh et~al.(2021)Ramesh, Pavlov, Goh, Gray, Voss, Radford, Chen, and
  Sutskever]{arxiv/RameshPGGVRCS21}
Aditya Ramesh, Mikhail Pavlov, Gabriel Goh, Scott Gray, Chelsea Voss, Alec
  Radford, Mark Chen, and Ilya Sutskever.
\newblock Zero-shot text-to-image generation.
\newblock \emph{arXiv preprint}, 2021.

\bibitem[Shpitser and Pearl(2006)]{conf/aaai/ShpitserP06}
Ilya Shpitser and Judea Pearl.
\newblock Identification of joint interventional distributions in recursive
  semi-markovian causal models.
\newblock In \emph{AAAI}, 2006.

\bibitem[Spencer et~al.(2021)Spencer, Choudhury, Venkatraman, Ziebart, and
  Bagnell]{arxiv/SpencerCVZB21}
Jonathan Spencer, Sanjiban Choudhury, Arun Venkatraman, Brian Ziebart, and
  J.~Andrew Bagnell.
\newblock Feedback in imitation learning: The three regimes of covariate shift,
  2021.

\bibitem[Studeny(2005)]{book/studeny05}
Milan Studeny.
\newblock \emph{{Probabilistic Conditional Independence Structures}}.
\newblock Springer, 2005.

\bibitem[Tennenholtz et~al.(2020)Tennenholtz, Shalit, and
  Mannor]{conf/aaai/TennenholtzSM20}
Guy Tennenholtz, Uri Shalit, and Shie Mannor.
\newblock Off-policy evaluation in partially observable environments.
\newblock In \emph{AAAI}, 2020.

\bibitem[Zhang and Bareinboim(2017)]{conf/ijcai/ZhangB17}
Junzhe Zhang and Elias Bareinboim.
\newblock Transfer learning in multi-armed bandits: A causal approach.
\newblock In \emph{IJCAI}, 2017.

\bibitem[Zhang and Bareinboim(2019)]{conf/neurips/ZhangB19}
Junzhe Zhang and Elias Bareinboim.
\newblock Near-optimal reinforcement learning in dynamic treatment regimes.
\newblock In \emph{NeurIPS}, 2019.

\bibitem[Zhang and Bareinboim(2020)]{conf/icml/ZhangB20}
Junzhe Zhang and Elias Bareinboim.
\newblock Designing optimal dynamic treatment regimes: A causal reinforcement
  learning approach.
\newblock In Hal~Daumé III and Aarti Singh, editors, \emph{ICML}, volume 119
  of \emph{Proceedings of Machine Learning Research}, 2020.

\bibitem[Zhang and Bareinboim(2021)]{conf/aaai/ZhangB21}
Junzhe Zhang and Elias Bareinboim.
\newblock Bounding causal effects on continuous outcomes.
\newblock In \emph{AAAI}, 2021.

\end{thebibliography}

\clearpage
\appendix
\section{Appendix}

\subsection{Recovering the standard POMDP transition model.}

Recovering $\hat{q}(o_{t+1}|h_t,a_t,i=1)$ can be done as follows:
\begin{align*}
    \hat{q}(o_{t+1}|h_t,a_t,i=1) &= \sum_{z_t}^{\mathcal{Z}} \hat{q}(z_t|h_t,i=1) \sum_{z_{t+1}}^{\mathcal{Z}} \hat{q}(z_{t+1}|z_t,a_t) \hat{q}(o_{t+1}|z_{t+1})
    \text{.}
\end{align*}
The second and third terms are readily available as components of the augmented POMDP model $\hat{q}$, while the first term can be recovered by unrolling a forward algorithm over the augmented DAG structure. First, we have
\begin{align*}
    \hat{q}(z_0,o_0|i=1) = \hat{q}(z_0) \hat{q}(o_0|z_0)
    \text{,} \\
    \hat{q}(z_0|h_0,i=1) = \frac{\hat{q}(z_0,o_0|i=1)}{\sum_{z_0}^\mathcal{Z} \hat{q}(z_0,o_0|i=1)}
    \text{.}
\end{align*}
Then, for every $t'$ from $0$ to $t-1$,
\begin{align*}
    \hat{q}(z_{t'+1},o_{t'+1}|h_{t'},a_{t'},i=1) &= \sum_{z_{t'}}^{\mathcal{Z}} \hat{q}(z_{t'}|h_{t'},i=1)  \hat{q}(z_{t'+1}|z_{t'},a_{t'})  \hat{q}(o_{t'+1}|z_{t'+1})
    \text{,} \\
    \hat{q}(z_{t'+1}|h_{t'+1},i=1) &= \frac{\hat{q}(z_{t'+1},o_{t'+1}|h_{t'},a_{t'},i=1)}{\sum_{z_{t'+1}}^{S} \hat{q}(z_{t'+1},o_{t'+1}|h_{t'},a_{t'},i=1)}
    \text{.}
\end{align*}

\subsection{Proof of Theorem~\ref{th:bounds}.}

\thbounds*

\begin{proof}[Proof of Theorem~\ref{th:bounds}.]
Consider $q(\tau,i)\in\mathcal{Q}$ any distribution that follows our augmented POMDP constraints. Then, for every $T \geq 1$ we have
\begin{align*}
    \prod_{t=0}^{T-1} q(a_{t}|h_{t},i) q(o_{t+1}|h_{t},a_{t},i) &= \frac{q(\tau|i)}{q(h_0|i)} \\
    &= \sum_{z_{0 \to T}}^{\mathcal{Z}^{T+1}} q(z_0|h_0,i) \prod_{t=0}^{T-1} q(a_{t},z_{t+1},o_{t+1}|z_{t},h_{t},i)
    \text{,}
\end{align*}
by using $A_t,Z_{t+1},O_{t+1} \indep Z_{0\to t-1} \mid Z_t, H_t,I$, which can be read via $d$-separation in the augmented POMDP DAG. Likewise, for every $t \geq 0$ we have
\begin{align*}
    q(o_{t+1}|h_{t},a_{t},i=1) =& \sum_{z_{t+1}}^{\mathcal{Z}} q(z_{t+1},o_{t+1}|h_{t},a_{t},i=1) \\
    =& \sum_{z_{t}}^{\mathcal{Z}} q(z_{t}|h_t,i=1) \sum_{z_{t+1}}^{\mathcal{Z}} q(z_{t+1},o_{t+1}|z_t,h_t,a_t,i=0)
    \text{,}
\end{align*}
by using $Z_t \indep A_t \mid H_t,I=1$ and $Z_{t+1},O_{t+1} \indep I \mid Z_{t},A_{t},H_{t}$. Then for every $t \geq 1$ we can further write
\begin{align*}
    q(o_{t+1}|h_{t},a_{t},i=1) =& \sum_{z_{t}}^{\mathcal{Z}} \frac{q(z_t,o_t|h_{t-1},a_{t-1},i=1)}{q(o_{t}|h_{t-1},a_{t-1},i=1)} \sum_{z_{t+1}}^{\mathcal{Z}} q(z_{t+1},o_{t+1}|z_t,h_t,a_t,i=0)
    \text{.}
\end{align*}
By recursively decomposing every $q(z_t,o_t|h_{t-1},a_{t-1},i=1)$ until $t=0$, and finally by using $Z_0 \indep I \mid H_0$, we obtain that for any $T \geq 1$
\begin{align*}
    \prod_{t=0}^{T-1} q(o_{t+1}|h_{t},a_{t},i=1) &= \sum_{z_{0 \to T}}^{\mathcal{Z}^{T+1}} q(z_0|h_0,i=0) \prod_{t=0}^{T-1} q(z_{t+1},o_{t+1}|z_t,a_t,h_t,i=0)
    \text{,}
\end{align*}
which can be re-expressed as
\begin{align*}
    \prod_{t=0}^{T-1} q(o_{t+1}|h_{t},a_{t},i=1) &= \sum_{a'_{0 \to T-1}}^{\mathcal{A}^{T}} \sum_{z_{0 \to T}}^{\mathcal{Z}^{T+1}} q(z_0|h_0,i=0) \prod_{t=0}^{T-1} q(a'_t|z_t,h_t,i=0) q(z_{t+1},o_{t+1}|z_t,h_t,a_t,i=0)
    \text{.}
\end{align*}
By considering the case $a'_{0\to T-1} = a_{0\to T-1}$ in isolation, and by assuming probabilities are positive, we readily obtain our first bound,
\begin{align*}
    \prod_{t=0}^{T-1} q(o_{t+1}|h_{t},a_{t},i=1)
    \geq \prod_{t=0}^{T-1} q(a_{t}|h_{t},i=0) q(o_{t+1}|h_{t},a_{t},i=0)
    \text{.}
\end{align*}
In order to obtain our second bound, we further isolate the cases $a'_0 \neq a_0$, then $a'_0 = a_0 \land a'_1 \neq a_1$, then $a'_{0} = a_{0} \land a'_{1} = a_{1} \land a'_2 \neq a_2$ and so on until $a'_{0\to{T-2}} = a_{0\to{T-2}} \land a'_{T-1} \neq a_{T-1}$, which yields
\begin{align*}
    &\prod_{t=0}^{T-1} q(o_{t+1}|h_{t},a_{t},i=1)
    = \prod_{t=0}^{T-1} q(a_{t}|h_{t},i=0) q(o_{t+1}|h_{t},a_{t},i=0) \\
    +& \sum_{z_{0 \to T}}^{\mathcal{Z}^{T+1}} q(z_0|h_0,i=0) \left(1 - q(a_0|z_0,h_0,i=0)\right) \prod_{t=0}^{T-1} q(z_{t+1},o_{t+1}|z_t,h_t,a_t,i=0) \\
    +& \sum_{K=0}^{T-2} \sum_{z_{0 \to T}}^{\mathcal{Z}^{T+1}} q(z_0|h_0,i=0) \prod_{t=0}^{K} q(a_t,z_{t+1},o_{t+1}|z_t,h_t,i=0) \left(1 - q(a_K|z_K,h_K,i=0)\right) \\ & \prod_{t=K+1}^{T-1} q(z_{t+1},o_{t+1}|z_t,h_t,a_t,i=0)
    \text{.}
\end{align*}
Then by assuming probabilities are upper bounded by 1, we obtain
\begin{align*}
    \prod_{t=0}^{T-1} q(o_{t+1}|h_{t},a_{t},i=1)
    &\leq \prod_{t=0}^{T-1} q(a_{t}|h_{t},i=0) q(o_{t+1}|h_{t},a_{t},i=0) + 1 - q(a_0|h_0,i=0) \\
    +& \sum_{K=0}^{T-2} \prod_{t=0}^{K} q(o_{t+1}|h_t,a_t,i=0) \left(\prod_{t=0}^{K-1} q(a_t|h_t,i=0) - \prod_{t=0}^{K} q(a_t|h_t,i=0)\right) \\
    &\leq \prod_{t=0}^{T-1} q(a_{t}|h_{t},i=0) q(o_{t+1}|h_{t},a_{t},i=0) + 1 - \prod_{t=0}^{T-1} q(a_t|h_t,i=0)
    \text{.}
\end{align*}
Finally, with $\hat{q}$ solution of (\ref{eq:latent-learning}) and $|\mathcal{D}_{obs}| \to \infty$ we have that $\displaystyle D_{\mathrm{KL}} ( p(\tau|i=0) \Vert \hat{q}(\tau|i=0) ) = 0$, and thus $\hat{q}(a_{t}|h_{t},i=0) = p(a_{t}|h_{t},i=0)$ and $\hat{q}(o_{t+1}|h_{t},a_{t},i=0) = p(o_{t+1}|h_{t},a_{t},i=0)$, which shows the desired result.
\end{proof}

\clearpage
\subsection{Experimental details}
\label{sec:expe_details}

\paragraph{Training.} In all our experiments we use a latent space $\mathcal{Z}$ size of $|\mathcal{Z}|=32$ and train our tabular model $\hat{q}$ by minimizing the negative log likelihood via gradient descent. We use the Adam optimizer \citep{conf/iclr/KingmaB14} with a learning rate of $10^{-2}$, and train for $500$ epochs consisting of $50$ gradient descent steps with minibatches of size $32$. We divide the learning rate by 10 after 10 epochs without loss improvement (reduce on plateau), and we stop training after 20 epochs without improvement (early stopping). In the first experiment we derive the optimal policy $\hat{\pi}^\star$ exactly, and in the second experiment we train a ``dreamer'' RL agent on imaginary samples $\tau \sim \hat{q}(\tau|i=1)$ obtained from the model, using the belief states $\hat{q}(s_t|h_t)$ as features. We use a simple Actor-Critic algorithm for training, and our agents consist of a simple MLP with two hidden layers for both the critic and the policy parts. Agents are trained until convergence or with a maximum number of 1000 epochs, with a learning rate of $10^{-2}$, a discount factor $\gamma=0.9$ and a batch size of $8$.

\paragraph{JS divergence.} To evaluate the general quality of the recovered transition models, we compute the expected Jensen-Shannon divergence between the learned $\hat{q}(o_{t+1}|h_t,i=1)$ and the true $p(o_{t+1}|h_t,i=1)$, over transitions generated using a uniformly random policy $\pi_{\textit{rand}}$,
\begin{align*}
    &\frac{1}{2} \mathbb{E}_{\tau \sim p_\textit{init}, p_\textit{trans}, p_\textit{obs},\pi_\textit{rand}}\left[\log \frac{p(o_{0})}{m(o_{0})} + \sum_{t=1}^{|\tau|} \log \frac{p(o_{t+1}|h_t,i=1)}{m(o_{t+1}|h_t,i=1)}\right]\\
    + &\frac{1}{2}\mathbb{E}_{\tau \sim \hat{q}_\textit{init}, \hat{q}_\textit{trans}, \hat{q}_\textit{obs},\pi_\textit{rand}}\left[\log \frac{\hat{q}(o_{0})}{m(o_{0})} + \sum_{t=1}^{|\tau|} \log \frac{\hat{q}(o_{t+1}|h_t,i=1)}{m(o_{t+1}|h_t,i=1)}\right]
    \text{,}
\end{align*}
where $m(.) = \frac{1}{2}\left(\hat{q}(.) + p(.)\right)$. In the first experiment we compute the JS exactly, while in the second experiment we use a stochastic approximation over 100 trajectories $\tau$ to estimate each of the expectation terms in the JS empirically.

\paragraph{Reward.} To evaluate quality of the recovered transition models for solving the original RL task, that is, maximizing the expected long-term reward, we evaluate the policy $\hat{\pi}^\star$, obtained after planning with the recovered model $\hat{q}$, on the true environment $p$,
\begin{equation*}
    \mathbb{E}_{\tau \sim p_\textit{init}, p_\textit{trans}, p_\textit{obs},\hat{\pi}^\star}\left[\sum_{t=0}^{|\tau|} R(o_t)\right]
    \text{.}
\end{equation*}
In the first experiment we compute this expectation exactly, while in the second experiment we use a stochastic approximation using 100 trajectories $\tau$.

\clearpage
\subsection{Complete empirical results}
\label{sec:additional_results}

\subsubsection{Door experiment}
\label{sec:door_expe}

The \emph{door} experiment corresponds to a simple binary bandit setting, that is, a specific POMDP with horizon $H=1$ and no observation. The bandit dynamics are described in Table~\ref{tab:bandit_tables}.

\begin{table}[ht]
\begin{minipage}[t][3.5cm][t]{0.45\textwidth} 
    \centering
    \begin{tabular}{|c|c|}
    \hline
    \multicolumn{2}{|c|}{\textit{light}}     \\
    \hline
    red & green \\
    \hline
    0.6 & 0.4 \\
    \hline
    \multicolumn{2}{c}{} \\
    \multicolumn{2}{c}{$p(\textit{light})$}
    \end{tabular}
\end{minipage}
\begin{minipage}[t][3.5cm][t]{0.45\textwidth} 
    \centering
    \begin{tabular}{|c|c|c|c|}
    \cline{3-4}
    \multicolumn{2}{c}{} & \multicolumn{2}{|c|}{\textit{door}} \\
    \hline
    \textit{light} & \textit{button} & closed  & open \\
    \hline
    \multirow{2}{*}{red} & A & 0.0 & 1.0 \\
    \cline{2-4}
    & B & 1.0 & 0.0 \\
    \hline
    \multirow{2}{*}{green} & A & 1.0 & 0.0 \\
    \cline{2-4}
    & B & 0.0 & 1.0 \\
    \hline
    \multicolumn{4}{c}{} \\
    \multicolumn{4}{c}{$p(\textit{door}|\textit{light}, \textit{button})$}
    \end{tabular}
\end{minipage}
\caption{Probability tables for our \emph{door} bandit problem.}
\label{tab:bandit_tables}
\end{table}

We repeat the \emph{door} experiment in six different scenarios, corresponding to different privileged policies $\pi_\textit{prv}$ ranging from a totally random agent to a perfectly good and a perfectly bad agent. Each time, we evaluate the performance of the \emph{no obs}, \emph{naive} and \emph{augmented} approaches under different data regimes, by varying the sample size for both the observational data $\mathcal{D}_\textit{obs}$ and the interventional data $\mathcal{D}_\textit{int}$ in the range $(1, 2, 4, 8, 16, 32, 64, 128, 256, 512)$.

In each scenario, we report both the expected reward and the JS as heatmaps with $|\mathcal{D}_\textit{int}|$ and $|\mathcal{D}_\textit{obs}|$ in the $x$-axis and $y$-axis respectively, to highlight the combined effect of the sample sizes on each approach. We also provide as a heatmap the difference between our approach, \emph{augmented}, and the two other approaches \emph{no obs} and \emph{naive}. We always plot the expected reward in the first row, and JS in the second row. As a remark, shades of green show gains in reward (the higher the better), while shades of purple show gains in JS (the lower the better).

Finally, we also present two plots which provide a focus on the data regime that corresponds to the largest number of observational data ($|\mathcal{D}_\textit{obs}|=512$), as in the main paper.

The scenario reported in the main paper is the first one, that is, \emph{noisy good agent}.



\clearpage
\textbf{Noisy Good Agent}

In the noisy good agent setting, the agent plays halfway between a perfect and a random policy. The diversity of its action leads to a good start for the \textit{naive} model but the bias it contains is hard to overcome. In contrast, our method makes good use of the observational data from the start and is also able to correct the bias as interventional data come in, eventually converging towards the true transition model. One can clearly see two regimes where our approach takes the best of \textit{naive} at first, and \textit{no obs} then - see Figure~\ref{fig:toy1_noisy_good}.

\begin{figure}[h!]
    \centering
    
    \raisebox{22mm}{
    \begin{minipage}{0.3\textwidth} 
        \centering
        \begin{tabular}{|c|c|c|}
        \multicolumn{3}{c}{$\pi_\textit{prv}(\textit{button}|\textit{light})$}\vspace{0.05 in}\\
        \cline{2-3}
        \multicolumn{1}{c}{} & \multicolumn{2}{|c|}{\textit{button}}     \\
        \hline
        \textit{light}  & A  & B   \\
        \hline
        red           & 0.9       & 0.1    \\
        green           & 0.4        & 0.6    \\
        \hline
        \end{tabular}
    \end{minipage}}
    \hspace*{\fill}
    \includegraphics[width=0.3\textwidth]{toy1/noisy_good/js_nobs_512.pdf}
   \hspace*{\fill}
    \includegraphics[width=0.3\textwidth]{toy1/noisy_good/reward_nobs_512.pdf} \\
    \hspace*{\fill} \\
    \includegraphics[width=\textwidth]{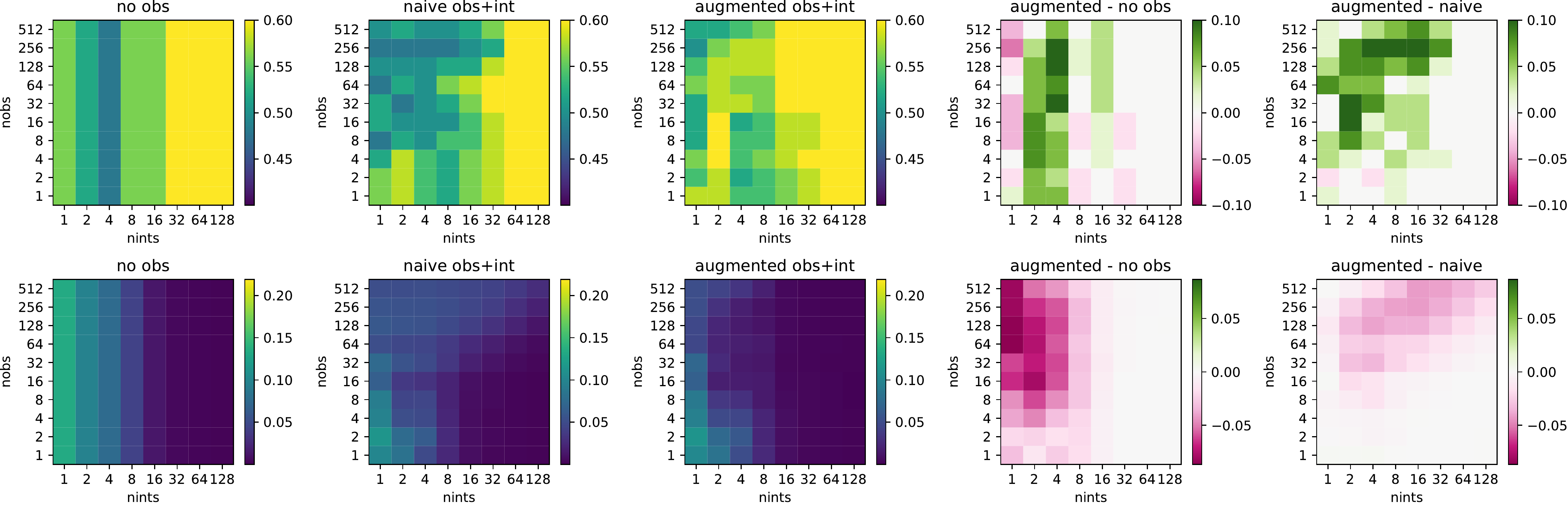}
    \caption{Noisy good agent setting. Heatmaps correspond respectively to the expected reward (top row, higher is better) and the JS divergence (bottom row, lower is better).}
    \label{fig:toy1_noisy_good}
\end{figure}

\clearpage
\textbf{Random Agent}

A random policy yields by essence unconfounded observational data, as it does not exploit the privileged information. Hence, the \textit{naive} approach is unbiased in this case, and makes effective use of the observational data. Our approach, \emph{augmented}, exhibits an overall comparable performance, only slightly worse at times. We believe this can be explained by the additional complexity of our method, which goes through a deconfounding step, and is not best suited to random data - see Figure~\ref{fig:toy1_random}.

\begin{figure}[h]
    \centering
    
    \raisebox{22mm}{
    \begin{minipage}{0.3\textwidth} 
        \centering
        \begin{tabular}{|c|c|c|}
        \multicolumn{3}{c}{$\pi_\textit{prv}(\textit{button}|\textit{light})$}\vspace{0.05 in}\\
        \cline{2-3}
        \multicolumn{1}{c}{} & \multicolumn{2}{|c|}{\textit{button}}     \\
        \hline
        \textit{light}  & A  & B   \\
        \hline
        red           & 0.5        & 0.5    \\
        green           & 0.5        & 0.5    \\
        \hline
        \end{tabular}
    \end{minipage}}
    \hspace*{\fill}
    \includegraphics[width=0.3\textwidth]{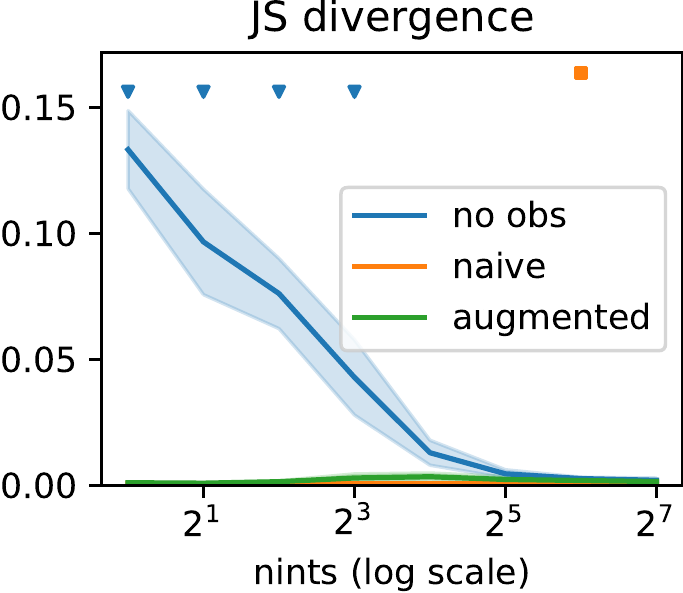}
    \hspace*{\fill}
    \includegraphics[width=0.3\textwidth]{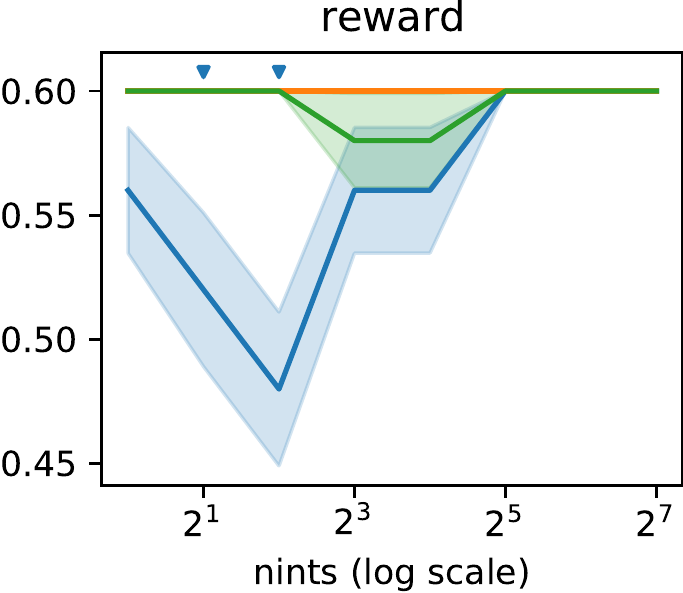}
    \hspace*{\fill} \\
    \includegraphics[width=\textwidth]{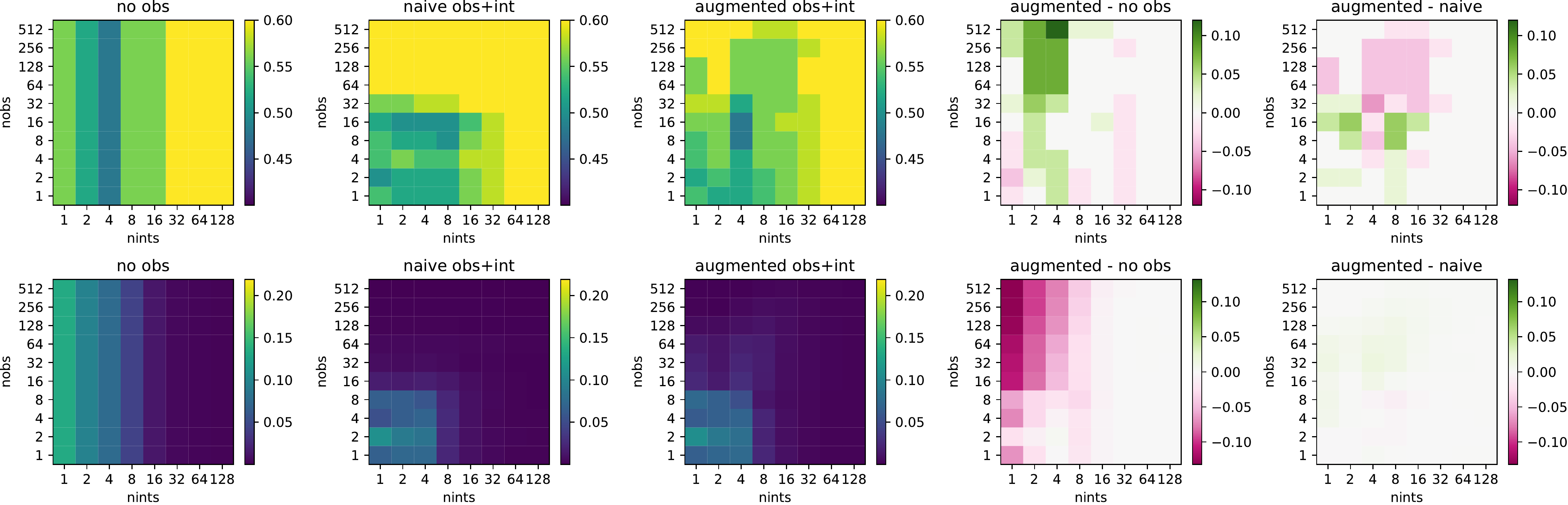}
    \caption{Random agent setting. Heatmaps correspond respectively to the expected reward (top row, higher is better) and the JS divergence (bottom row, lower is better).}
    \label{fig:toy1_random}
\end{figure}

\clearpage
\textbf{Perfectly Good Agent}

Observing a perfectly good agent playing induces a strong positive bias, because every observed action always collects a positive reward. As such, the \textit{naive} approach struggles to learn a good transition model. The bias however is quickly corrected by our \emph{augmented} approach, which eventually converges to the true transition model faster than the \emph{no obs} approach - see Figure ~\ref{fig:toy1_perfect_good}.

\begin{figure}[ht]
    \centering
    
    \raisebox{22mm}{
    \begin{minipage}{0.3\textwidth} 
        \centering
        \begin{tabular}{|c|c|c|}
        \multicolumn{3}{c}{$\pi_\textit{prv}(\textit{button}|\textit{light})$}\vspace{0.05 in}\\
        \cline{2-3}
        \multicolumn{1}{c}{} & \multicolumn{2}{|c|}{\textit{button}}     \\
        \hline
        \textit{light}  & A  & B   \\
        \hline
        red           & 1.0       & 0.0    \\
        green           & 0.0        & 1.0    \\
        \hline
        \end{tabular}
    \end{minipage}}
    \hspace*{\fill}
    \includegraphics[width=4cm]{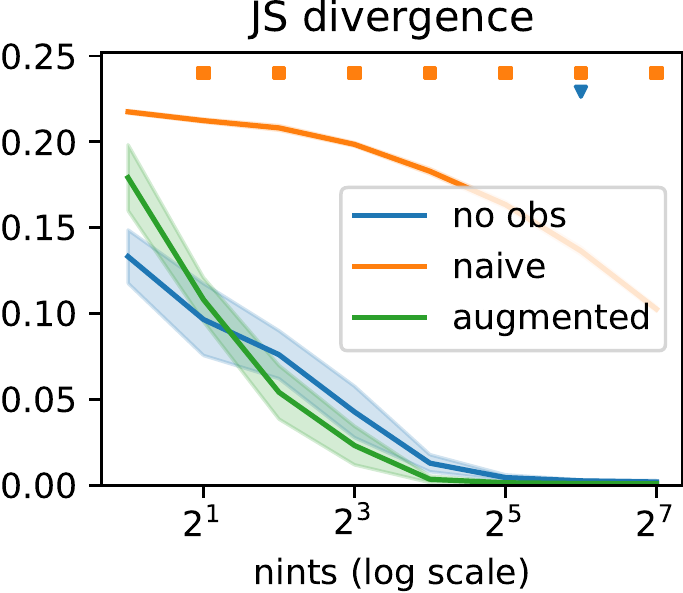}
    \hspace*{\fill}
    \includegraphics[width=4cm]{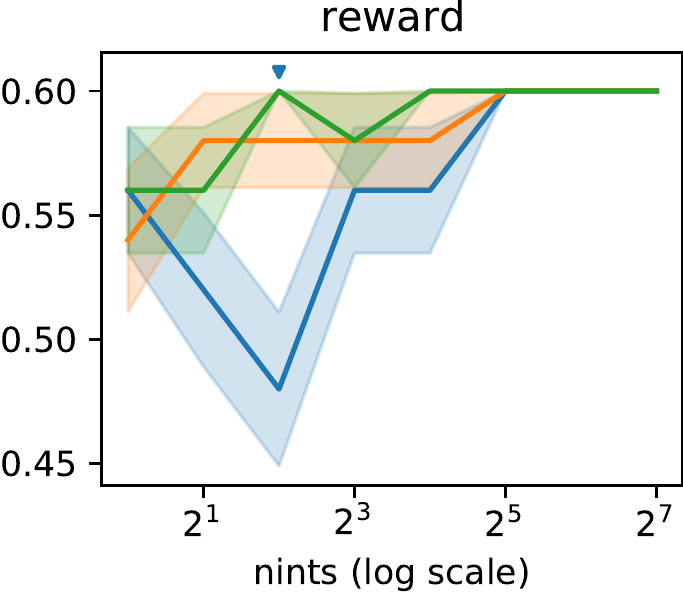}
    \hspace*{\fill} \\
    \includegraphics[width=\textwidth]{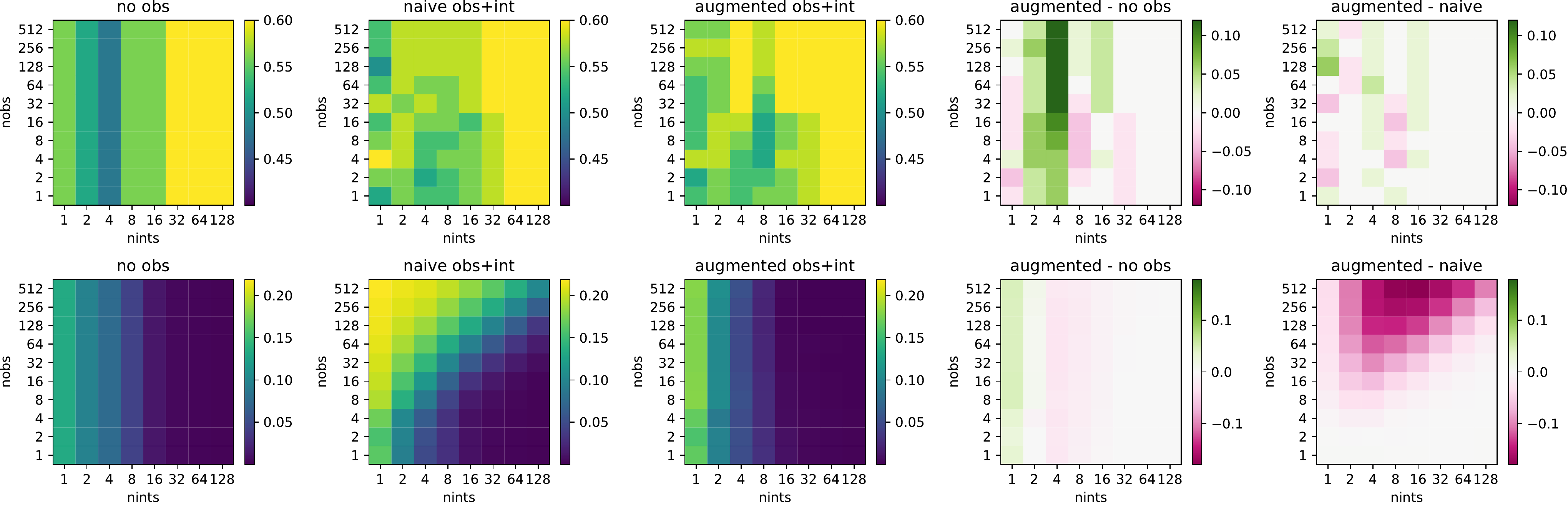}
    \caption{Perfectly good agent setting. Heatmaps correspond respectively to the expected reward (top row, higher is better) and the JS divergence (bottom row, lower is better).}
    \label{fig:toy1_perfect_good}
\end{figure}

\clearpage
\textbf{Perfectly Bad Agent}

Similarly to the previous setting, observing an agent that always chooses a bad action leads to a strong negative bias, as every action is associated to a low reward. The behaviour in terms of JS and reward is similar as well, however our approach is not clearly distinguishable from \emph{no obs} in this setting - see Figure~\ref{fig:toy1_perfect_bad}.

\begin{figure}[ht]
    \centering
    
    \raisebox{22mm}{
    \begin{minipage}{0.3\textwidth} 
        \centering
        \begin{tabular}{|c|c|c|}
        \multicolumn{3}{c}{$\pi_\textit{prv}(\textit{button}|\textit{light})$}\vspace{0.05 in}\\
        \cline{2-3}
        \multicolumn{1}{c}{} & \multicolumn{2}{|c|}{\textit{button}}     \\
        \hline
        \textit{light}  & A  & B   \\
        \hline
        red           & 0.0       & 1.0    \\
        green           & 1.0        & 0.0    \\
        \hline
        \end{tabular}
    \end{minipage}}
    \hspace*{\fill}
    \includegraphics[width=4cm]{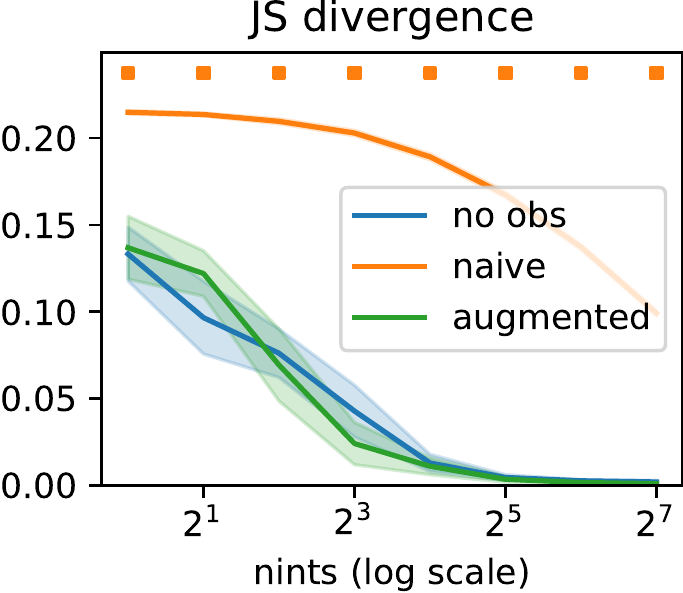}
    \hspace*{\fill}
    \includegraphics[width=4cm]{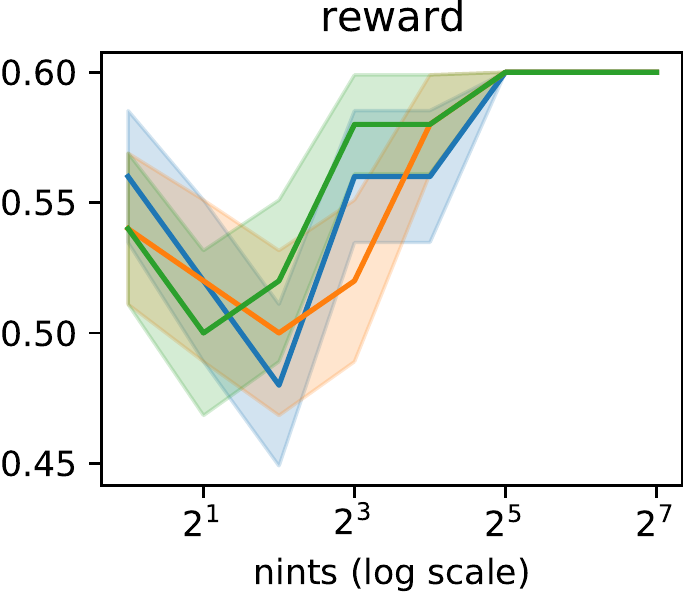}
    \hspace*{\fill} \\
    \includegraphics[width=\textwidth]{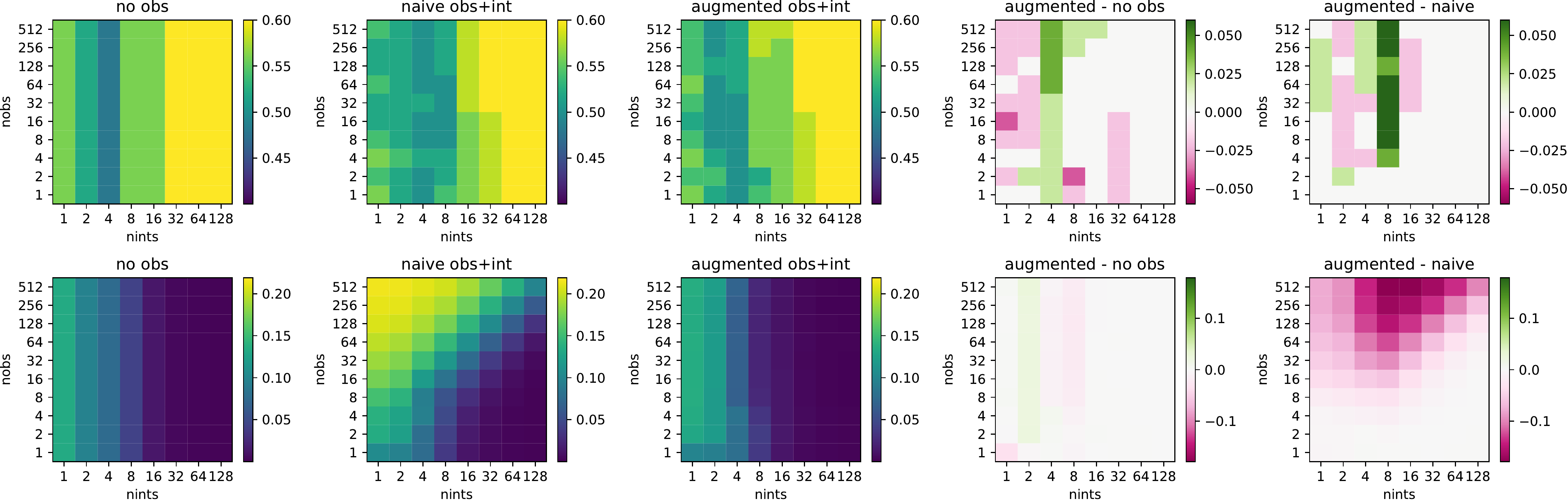}
    \caption{Perfectly bad agent setting. Heatmaps correspond respectively to the expected reward (top row, higher is better) and the JS divergence (bottom row, lower is better).}
    \label{fig:toy1_perfect_bad}
\end{figure}

\clearpage
\textbf{Positively Biased Agent}

Here the agent's policy is considered as \textit{positively biased} in the sense that the agent will only obtain a positive reward when playing button A (with 55\% chances) and never by playing button B (0\% chances). Because playing button A is actually the optimal policy, this strong bias has a positive effect on the reward. Hence the \textit{naive} approach, although worse in terms of JS that our approach, will always lead to a very good policy in terms of reward. This can easily be witnessed in the \textit{augmented - naive} reward plot of Figure~\ref{fig:toy1_strong_good_bias}, where purple reflects a deterioration.

\begin{figure}[ht]
    \centering
    
    \raisebox{22mm}{
    \begin{minipage}{0.3\textwidth} 
        \centering
        \begin{tabular}{|c|c|c|}
        \multicolumn{3}{c}{$\pi_\textit{prv}(\textit{button}|\textit{light})$}\vspace{0.05 in}\\
        \cline{2-3}
        \multicolumn{1}{c}{} & \multicolumn{2}{|c|}{\textit{button}}     \\
        \hline
        \textit{light}  & A  & B   \\
        \hline
        red           & 0.8       & 0.2    \\
        green           & 1.0       & 0.0   \\
        \hline
        \end{tabular}
    \end{minipage}}
    \hspace*{\fill}
    \includegraphics[width=4cm]{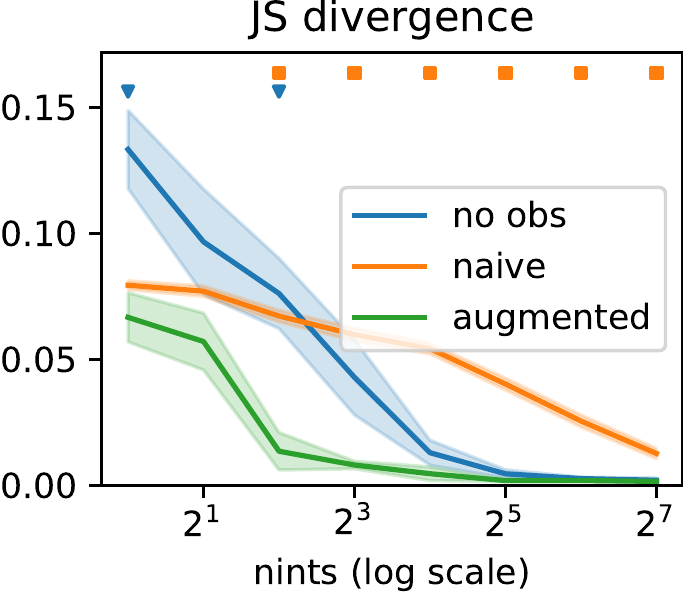}
    \hspace*{\fill}
    \includegraphics[width=4cm]{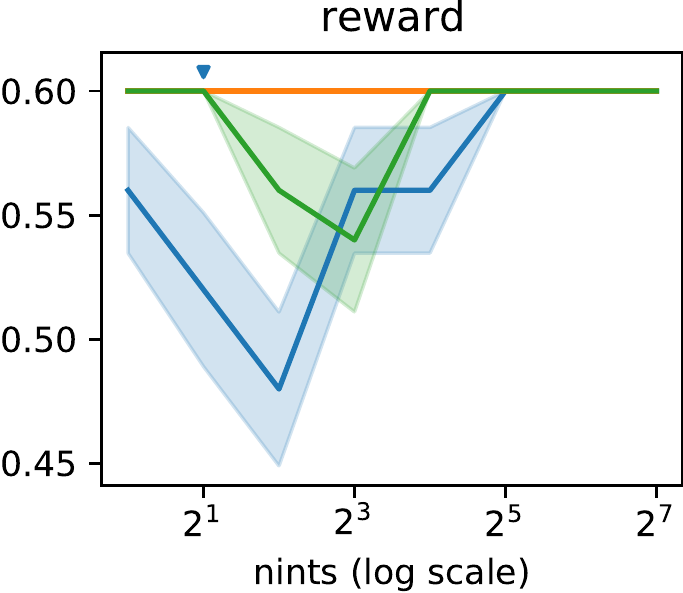}
    \hspace*{\fill} \\
    \includegraphics[width=\textwidth]{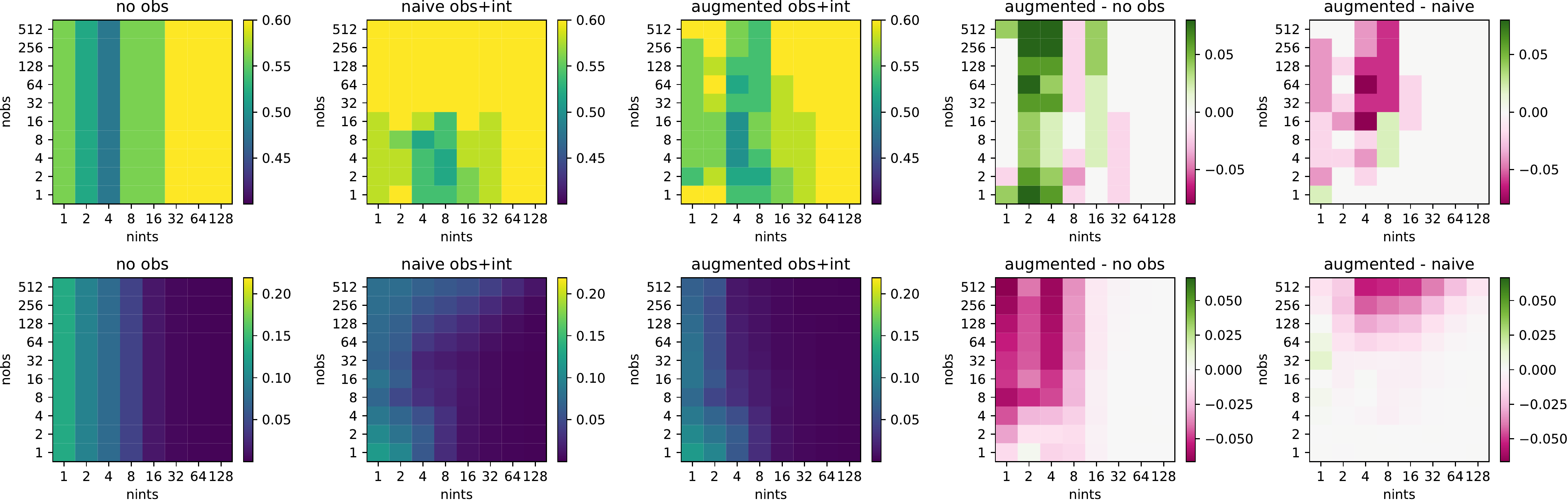}
    \caption{Positively biased agent setting. Heatmaps correspond respectively to the expected reward (top row, higher is better) and the JS divergence (bottom row, lower is better).}
    \label{fig:toy1_strong_good_bias}
\end{figure}

\clearpage
\textbf{Negatively Biased Agent}\\

In an analogous way, a negatively biased agent will overuse button A, leading to mixed feelings regarding this button, whereas it will always get a positive reward each time it uses button B. This leads to the opposite behavior as we had in the previous setting, with the \textit{naive} approach always favoring the use of button B, and obtaining a bad performance in terms of reward. The \emph{naive} approach only gets better when a lot of interventional data is combined with the biased observational data, while our \emph{augmented} approach is able to overcome this pessimistic bias very early on, and converges faster than both \emph{no obs} and \emph{naive} - see Figure ~\ref{fig:toy1_strong_bad_bias}.

\begin{figure}[ht]
    \centering
    
    \raisebox{22mm}{
    \begin{minipage}{0.3\textwidth} 
        \centering
        \begin{tabular}{|c|c|c|}
        \multicolumn{3}{c}{$\pi_\textit{prv}(\textit{button}|\textit{light})$}\vspace{0.05 in}\\
        \cline{2-3}
        \multicolumn{1}{c}{} & \multicolumn{2}{|c|}{\textit{button}}     \\
        \hline
        \textit{light}  & A  & B   \\
        \hline
        red           & 1.0       & 0.0    \\
        green           & 0.8        & 0.2   \\
        \hline
        \end{tabular}
    \end{minipage}}
    \hspace*{\fill}
    \includegraphics[width=4cm]{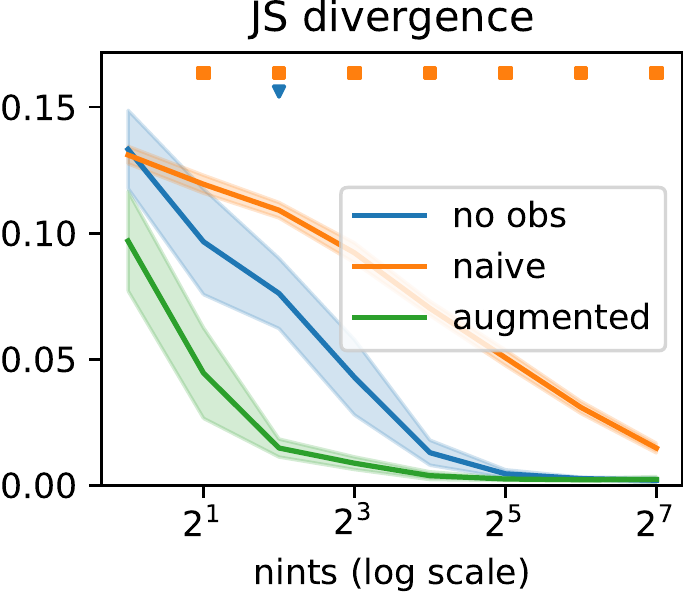}
    \hspace*{\fill}
    \includegraphics[width=4cm]{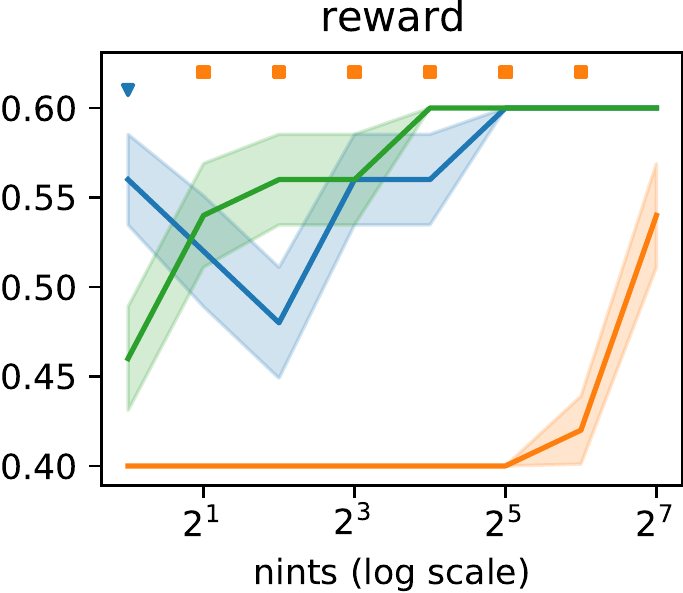}
    \hspace*{\fill} \\
    \includegraphics[width=\textwidth]{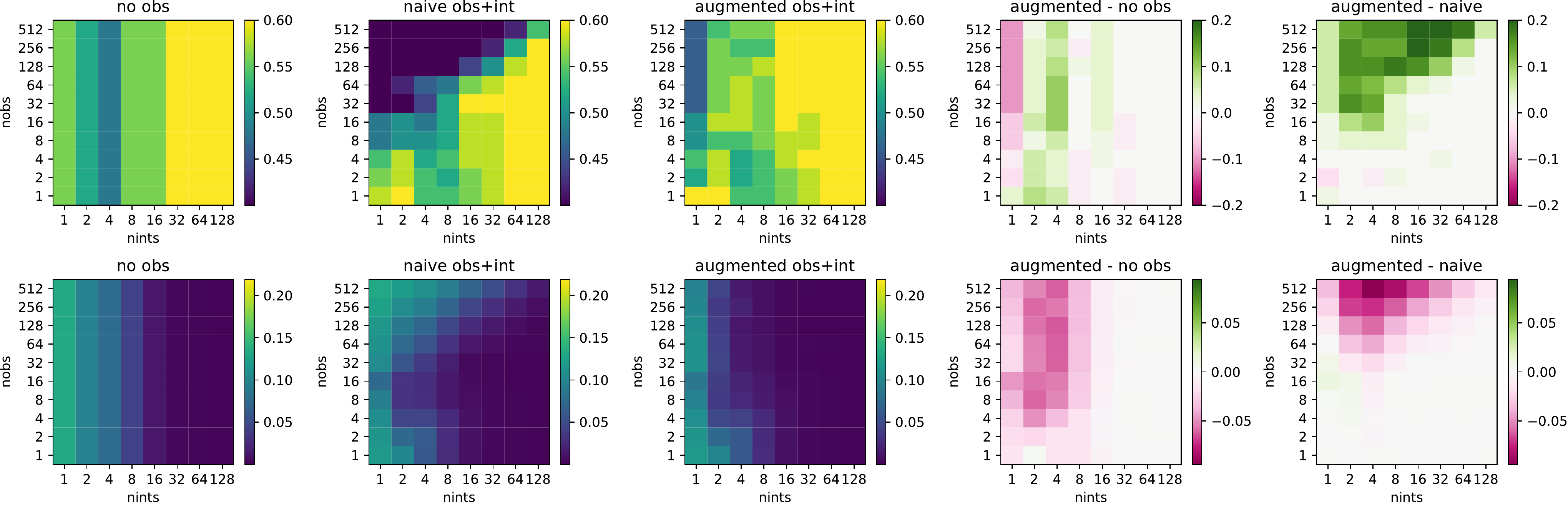}
    \caption{Pessimistic bias agent setting. Heatmaps correspond respectively to the expected reward (top row, higher is better) and the JS divergence (bottom row, lower is better).}
    \label{fig:toy1_strong_bad_bias}
\end{figure}

\clearpage
\subsubsection{Tiger experiment}

The \emph{tiger} experiment corresponds a synthetic POMDP toy problem proposed by \citet{conf/aaai/CassandraKL94}. In short, in this problem the agent stands in front of two doors to open, one of them having a tiger behind it (-100 reward), and the other one a treasure (+10 reward). The agent also gets a noisy observation of the system in the form of the roar from the tiger, which seems to originate from the correct door most of the time (85\% chances) and the wrong door sometimes (15\% chances). In order to reduce uncertainty the agent can listen to the tiger's roar again, at the cost of a small penalty (-1). We present the simplified POMDP dynamics in Table~\ref{tab:tiger_tables}, and in our experiments we impose a fixed horizon of size $H=50$.

\begin{table}[ht]
\centering
\begin{minipage}[t][7cm][t]{0.45\textwidth} 
    \centering
    \begin{tabular}{|c|c|}
    \hline
    \multicolumn{2}{|c|}{$\textit{tiger}$}     \\
    \hline
    left & right \\
    \hline
    0.5 & 0.5 \\
    \hline
    \multicolumn{2}{c}{} \\
    \multicolumn{2}{c}{$p(\textit{tiger})$}
    \end{tabular}
    
    \vspace{0.5cm}
    \begin{tabular}{|c|c|c|c|}
    \cline{3-4}
    \multicolumn{2}{c}{} & \multicolumn{2}{|c|}{$\textit{tiger}_{t+1}$} \\
    \hline
    $\textit{tiger}_t$ & $\textit{action}_t$ & left & right \\
    \hline
    \multirow{3}{*}{left} & listen & 1.0 & 0.0 \\
    \cline{2-4}
    & open left & 0.5 & 0.5 \\
    \cline{2-4}
    & open right & 0.5 & 0.5 \\
    \hline
    \multirow{3}{*}{right} & listen & 0.0 & 1.0 \\
    \cline{2-4}
    & open left & 0.5 & 0.5 \\
    \cline{2-4}
    & open right & 0.5 & 0.5 \\
    \hline
    \multicolumn{4}{c}{} \\
    \multicolumn{4}{c}{$p(\textit{tiger}_{t+1}|\textit{tiger}_t, \textit{action}_t)$}
    \end{tabular}
\end{minipage}
\begin{minipage}[t][7cm][t]{0.45\textwidth} 
    \centering
    \begin{tabular}{|c|c|c|}
    \cline{2-3}
    \multicolumn{1}{c}{} & \multicolumn{2}{|c|}{$\textit{roar}$}     \\
    \hline
    $\textit{tiger}$ & left & right \\
    \hline
    left & 0.85 & 0.15 \\
    \hline
    right & 0.15 & 0.85 \\
    \hline
    \multicolumn{3}{c}{} \\
    \multicolumn{3}{c}{$p(\textit{roar} | \textit{tiger})$}
    \end{tabular}
    
    \vspace{0.5cm}
    \begin{tabular}{|c|c|c|c|c|}
    \cline{3-5}
    \multicolumn{2}{c}{} & \multicolumn{3}{|c|}{$\textit{reward}$} \\
    \hline
    $\textit{tiger}$ & $\textit{action}$ & -1 & -100 & +10 \\
    \hline
    \multirow{3}{*}{left} & listen & 1.0 & 0.0 & 0.0 \\
    \cline{2-5}
    & open left & 0.0 & 1.0 & 0.0 \\
    \cline{2-5}
    & open right & 0.0 & 0.0 & 1.0 \\
    \hline
    \multirow{3}{*}{right} & listen & 1.0 & 0.0 & 0.0 \\
    \cline{2-5}
    & open left & 0.0 & 0.0 & 1.0 \\
    \cline{2-5}
    & open right & 0.0 & 1.0 & 0.0 \\
    \hline
    \multicolumn{5}{c}{} \\
    \multicolumn{5}{c}{$p(\textit{reward}|\textit{tiger}, \textit{action})$}
    \end{tabular}
\end{minipage}
\caption{Probability tables for the \emph{tiger} problem.}
\label{tab:tiger_tables}
\end{table}

For the tiger experiment we again consider again six different privileged policies $\pi_\textit{prv}$ for the observed agent. We then evaluate the performance of the \emph{no obs}, \emph{naive} and \emph{augmented} approaches under different data regimes, by keeping the observational data fixed to $|\mathcal{D}_\textit{obs}|=512$ while varying the varying the number of interventional data for $\mathcal{D}_\textit{int}$ in the range $(1, 2, 4, 8, 16, 32, 64, 128, 256, 512, 1024, 2048, 4096, 8192)$.

The scenario reported in the main paper is the first one, that is, \emph{noisy good agent}.

\clearpage
\textbf{Noisy Good Agent}\\

In this scenario the privileged agent adopts a policy which plays the optimal action most of the time, but also sometimes decides to listen or to open the wrong door. As can be seen, in this scenario our \emph{augmented} method makes the best use of the observational data, and is significantly better than both the \emph{no obs} and \emph{naive} approaches in the low-sample regime.

\begin{figure}[ht]
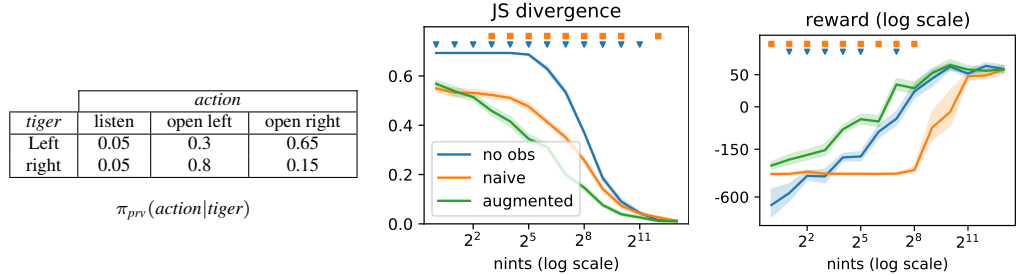

    \hspace*{\fill}
    \raisebox{17mm}{
    \scriptsize
    \begin{tabular}{|c|c|c|c|}
        \multicolumn{3}{c}{}\vspace{0.05 in}\\
        \cline{2-4}
        \multicolumn{1}{c}{} & \multicolumn{3}{|c|}{\textit{action}}     \\
        \hline
        \textit{tiger}  & listen & open left & open right \\
        \hline
        Left       & 0.05 & 0.3 & 0.65 \\
        right      & 0.05 & 0.8 & 0.15 \\
        \hline
        \multicolumn{4}{c}{} \\
        \multicolumn{4}{c}{$\pi_\textit{prv}(\textit{action}|\textit{tiger})$}
    \end{tabular}
    }
    \hspace*{\fill}
    \includegraphics[width=4cm]{toy2/noisy_good/js_nobs_512.pdf}
    \hspace*{\fill}
    \includegraphics[width=4cm]{toy2/noisy_good/reward_nobs_512.pdf}
    \hspace*{\fill}
    \caption{Noisy good agent.}
    \label{fig:toy2_noisy_good}
\end{figure}

\textbf{Random Agent}\\

In the random scenario there is no confounding, and observational data can be safely mixed with interventional data. The \emph{naive} thus does not suffer from any bias, and in fact is the one that converges the fastest to the optimal transition model and policy. Our method, while it manages to leverage the observational data to converge faster than \emph{no obs}, suffers from a worse performance than \emph{naive} in the low sample regime, most likely because it must try to recover a spurious confounding variable to distinguish the observational and interventional regimes.

\begin{figure}[ht]
    \hspace*{\fill}
    \raisebox{17mm}{
    \scriptsize
    \begin{tabular}{|c|c|c|c|}
        \multicolumn{3}{c}{}\vspace{0.05 in}\\
        \cline{2-4}
        \multicolumn{1}{c}{} & \multicolumn{3}{|c|}{\textit{action}}     \\
        \hline
        \textit{tiger}  & listen & open left & open right \\
        \hline
        left       & 0.33 & 0.33 & 0.33 \\
        right      & 0.33 & 0.33 & 0.33 \\
        \hline
        \multicolumn{4}{c}{} \\
        \multicolumn{4}{c}{$\pi_\textit{prv}(\textit{action}|\textit{tiger})$}
    \end{tabular}
    }
    \hspace*{\fill}
    \includegraphics[width=4cm]{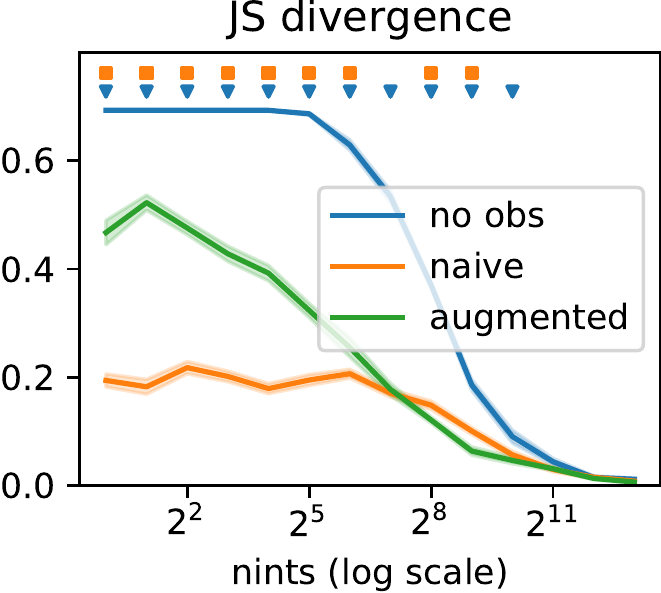}
    \hspace*{\fill}
    \includegraphics[width=4cm]{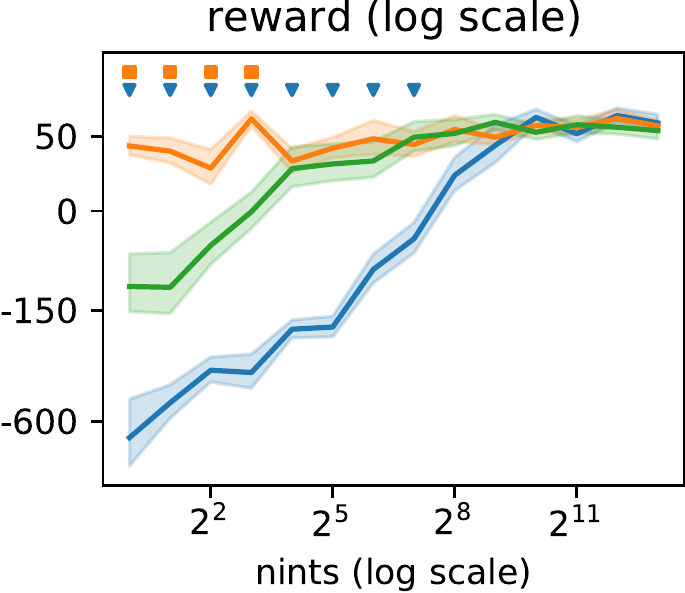}
    \hspace*{\fill}
    \caption{Random agent.}
    \label{fig:toy2_random}
\end{figure}

\clearpage
\textbf{Very Good Agent}\\

Here the privileged agent never opens the wrong door, and thus never receives the very penalizing -100 reward. As a result the \emph{naive} approach seems to be overly optimistic, which strongly affects the expected reward it obtains in the true environment. While our \emph{augmented} approach seems also to suffer from this bias in the very low sample regime (1, 2, 4 interventional trajectories), it is able to quickly overcome the bias and converges faster than \emph{no obs} to the optimal policy thanks to the observational data.

\begin{figure}[ht]
    \hspace*{\fill}
    \raisebox{17mm}{
    \scriptsize
    \begin{tabular}{|c|c|c|c|}
        \multicolumn{3}{c}{}\vspace{0.05 in}\\
        \cline{2-4}
        \multicolumn{1}{c}{} & \multicolumn{3}{|c|}{\textit{action}}     \\
        \hline
        \textit{tiger}  & listen & open left & open right \\
        \hline
        left       & 0.05 & 0.0 & 0.95 \\
        right      & 0.05 & 0.95 & 0.0 \\
        \hline
        \multicolumn{4}{c}{} \\
        \multicolumn{4}{c}{$\pi_\textit{prv}(\textit{action}|\textit{tiger})$}
    \end{tabular}
    }
    \hspace*{\fill}
    \includegraphics[width=4cm]{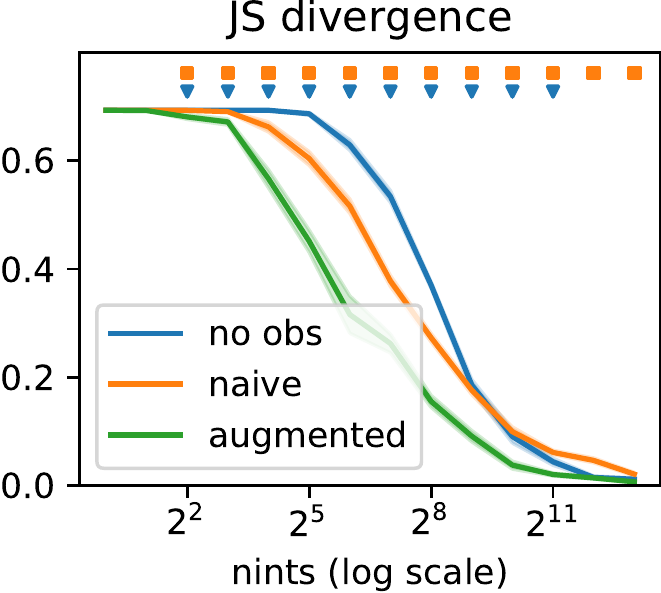}
    \hspace*{\fill}
    \includegraphics[width=4cm]{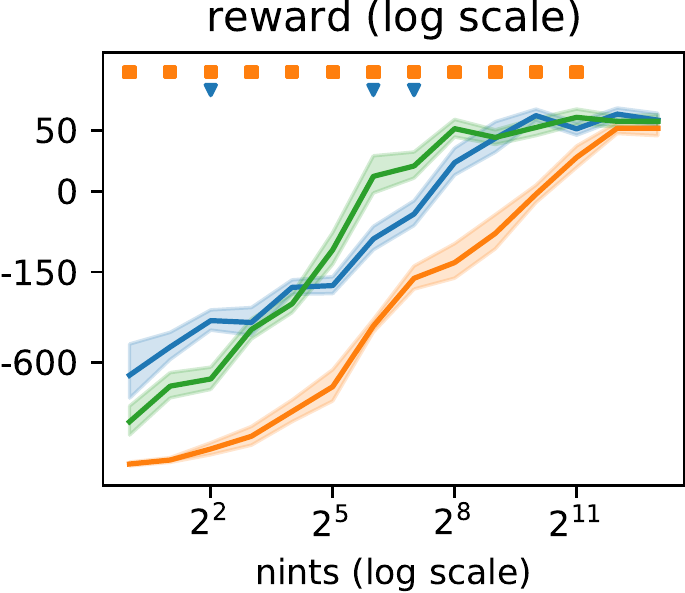}
    \hspace*{\fill}
    \caption{Very good agent.}
    \label{fig:toy2_very_good}
\end{figure}

\textbf{Very Bad Agent}\\

Here the privileged agent never opens the correct door, and thus never receives a positive reward (+10). As a result the \emph{naive} approach seems to be very conservative, and prefers not to take any chances opening a door in the low sample regime. It turns out that this strategy is not too bad in terms of reward (always listening yields a -51 total reward), and as such this observational bias seems to positively affect the performance of the \emph{naive} approach. Our \emph{augmented} method, on the other hand, seems to start taking more risks at the beginning, resulting in a worse reward performance despite a better JS divergence. Its performance eventually matches that of \emph{naive} in terms of reward, and it convergences to an optimal policy faster than \emph{no obs}.

\begin{figure}[ht]
    \hspace*{\fill}
    \raisebox{17mm}{
    \scriptsize
    \begin{tabular}{|c|c|c|c|}
        \multicolumn{3}{c}{}\vspace{0.05 in}\\
        \cline{2-4}
        \multicolumn{1}{c}{} & \multicolumn{3}{|c|}{\textit{action}}     \\
        \hline
        \textit{tiger}  & listen & open left & open right \\
        \hline
        left       & 0.05 & 0.95 & 0.0 \\
        right      & 0.05 & 0.0 & 0.95 \\
        \hline
        \multicolumn{4}{c}{} \\
        \multicolumn{4}{c}{$\pi_\textit{prv}(\textit{action}|\textit{tiger})$}
    \end{tabular}
    }
    \hspace*{\fill}
    \includegraphics[width=4cm]{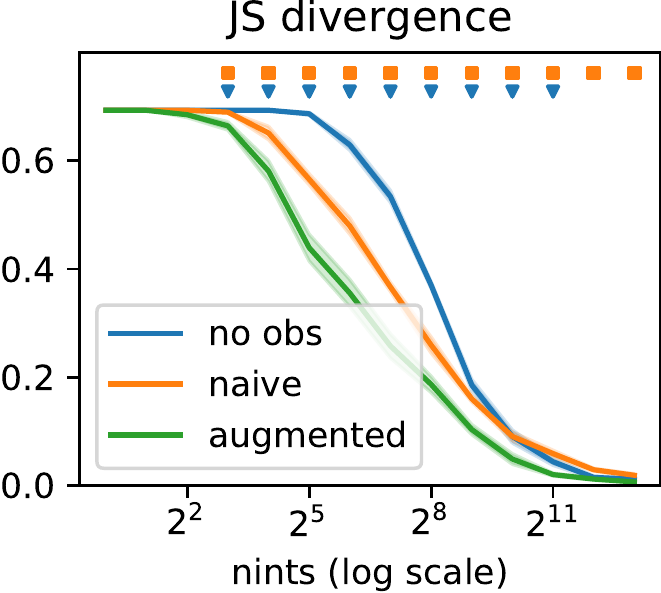}
    \hspace*{\fill}
    \includegraphics[width=4cm]{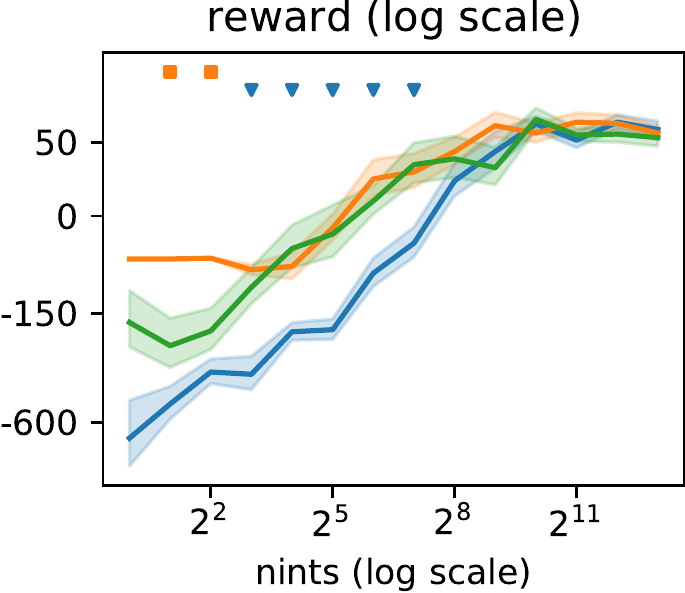}
    \hspace*{\fill}
    \caption{Very bad agent.}
    \label{fig:toy2_very_bad}
\end{figure}

\clearpage

\textbf{Optimistic, Right Door Biased Agent}\\

In this scenario, we induce a strong optimistic bias on both of the doors, similarly to the \emph{very good agent} scenario. In addition, we also infer a positive bias towards the right door, as the privileged agent decides to only open the wrong door when the tiger is behind the left door. The resulting behaviour for \emph{naive} and \emph{augmented} is very similar to what we see in the \emph{very good agent} scenario, with the \emph{naive} agent overestimating the potential reward behind each door, taking too much risks opening doors, and encountering a lot of tigers on the way.

\begin{figure}[ht]
    \hspace*{\fill}
    \raisebox{17mm}{
    \scriptsize
    \begin{tabular}{|c|c|c|c|}
        \multicolumn{3}{c}{}\vspace{0.05 in}\\
        \cline{2-4}
        \multicolumn{1}{c}{} & \multicolumn{3}{|c|}{\textit{action}}     \\
        \hline
        \textit{tiger}  & listen & open left & open right \\
        \hline
        left       & 0.05 & 0.20 & 0.75 \\
        right      & 0.05 & 0.95 & 0.00 \\
        \hline
        \multicolumn{4}{c}{} \\
        \multicolumn{4}{c}{$\pi_\textit{prv}(\textit{action}|\textit{tiger})$}
    \end{tabular}
    }
    \hspace*{\fill}
    \includegraphics[width=4cm]{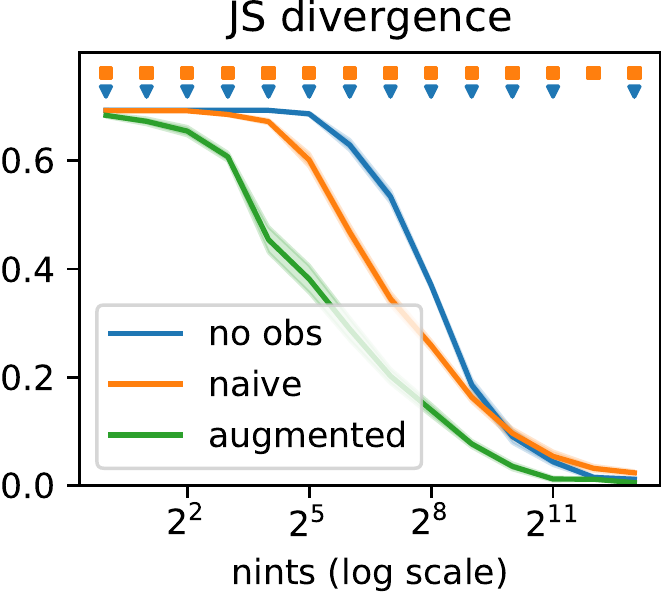}
    \hspace*{\fill}
    \includegraphics[width=4cm]{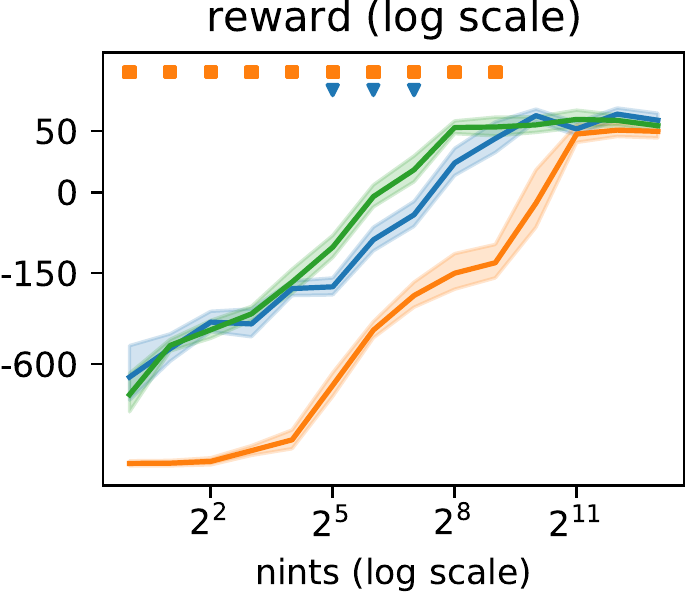}
    \hspace*{\fill}
    \caption{Optimistic, right door biased agent.}
    \label{fig:toy2_strong_good_bias}
\end{figure}

\textbf{Pessimistic, Right Door Biased Agent}\\

On the opposite, here we induce a strong pessimistic bias on both of the doors, similarly to the \emph{very bad agent} scenario. In addition, we again infer a positive bias towards the right door, as the privileged agent decides to only open the correct door when the tiger is behind the right door. The resulting behaviour for \emph{naive} and \emph{augmented} is again very similar to what we see in the \emph{very bad agent} scenario.

\begin{figure}[ht]
    \hspace*{\fill}
    \raisebox{17mm}{
    \scriptsize
    \begin{tabular}{|c|c|c|c|}
        \multicolumn{3}{c}{}\vspace{0.05 in}\\
        \cline{2-4}
        \multicolumn{1}{c}{} & \multicolumn{3}{|c|}{\textit{action}}     \\
        \hline
        \textit{tiger}  & listen & open left & open right \\
        \hline
        left       & 0.05 & 0.95 & 0.0 \\
        right      & 0.05 & 0.20 & 0.75 \\
        \hline
        \multicolumn{4}{c}{} \\
        \multicolumn{4}{c}{$\pi_\textit{prv}(\textit{action}|\textit{tiger})$}
    \end{tabular}
    }
    \hspace*{\fill}
    \includegraphics[width=4cm]{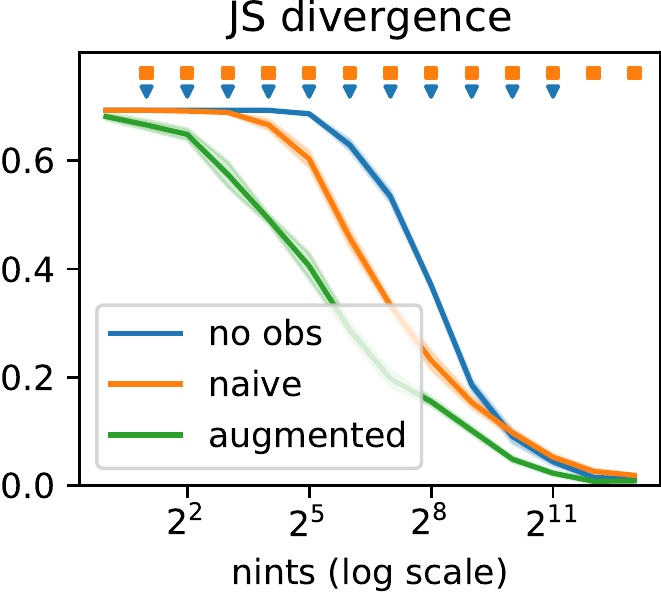}
    \hspace*{\fill}
    \includegraphics[width=4cm]{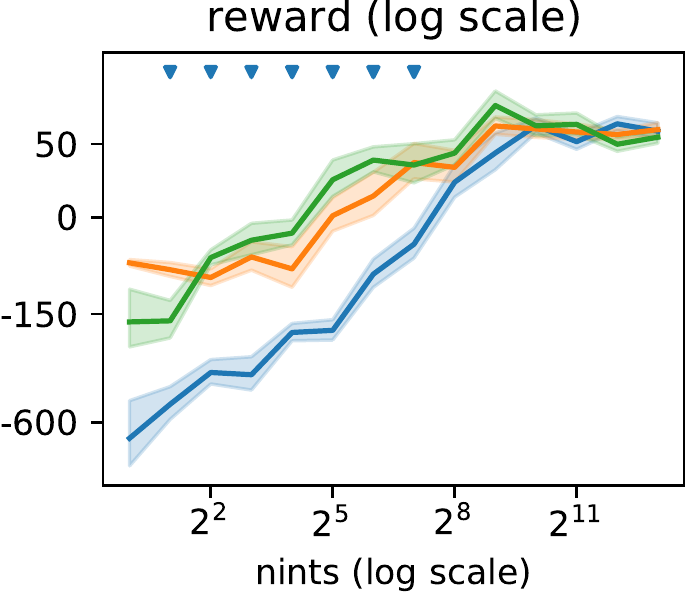}
    \hspace*{\fill}
    \caption{Pessimistic, right door biased agent.}
    \label{fig:toy2_strong_bad_bias}
\end{figure}

\end{document}